\def\eqref#1{equation~\ref{#1}}
\def\1{\bm{1}}
\DeclareMathAlphabet{\mathsfit}{\encodingdefault}{\sfdefault}{m}{sl}
\SetMathAlphabet{\mathsfit}{bold}{\encodingdefault}{\sfdefault}{bx}{n}
\newcommand{\E}{\mathbb{E}}
\newcommand{\R}{\mathbb{R}}
\let\classAND\AND
\let\AND\relax
\let\AND\classAND
\def\R{\mathbb{R}}
\def\E{\mathbb{E}}
\def\<{\langle}
\def\>{\rangle}
\DeclareMathOperator{\s}{sp}
\newcommand\norm[1]{\left\lVert#1\right\rVert}
\theoremstyle{plain}
\newtheorem{theorem}{Theorem}[section]
\newtheorem{proposition}[theorem]{Proposition}
\newtheorem{lemma}[theorem]{Lemma}
\newtheorem{corollary}[theorem]{Corollary}
\theoremstyle{definition}
\newtheorem{defn}{Definition}
\newtheorem{assumption}{Assumption}
\theoremstyle{remark}
\newtheorem{example}{Example}
\newtheorem{cexmp}{Counter-example}[section]
\newtheorem*{convention*}{Convention}
\newcommand{\guannan}[1]{{\color{blue}}}
\newcommand{\ziyi}[1]{{\color{cyan}}}
\title{Predictive Control and Regret Analysis of 
\\Non-Stationary MDP with Look-ahead Information}
\author{\name Ziyi Zhang \email ziyizhan@andrew.cmu.edu \\
      \addr Department of Electrical and Computer Engineering\\
       Carnegie Mellon University
      \AND
      \name Yorie Nakahira \email yorie@cmu.edu \\
      \addr Department of Electrical and Computer Engineering\\
       Carnegie Mellon University
      \AND
      \name Guannan Qu \email gqu@andrew.cmu.edu\\
      \addr Department of Electrical and Computer Engineering\\
       Carnegie Mellon University}
\begin{document}

\maketitle
\allowdisplaybreaks
\begin{abstract}
Policy design in non-stationary Markov Decision Processes (MDPs) is inherently challenging due to the complexities introduced by time-varying system transition and reward, which make it difficult for learners to determine the optimal actions for maximizing cumulative future rewards. Fortunately, in many practical applications, such as energy systems, look-ahead predictions are available, including forecasts for renewable energy generation and demand. In this paper, we leverage these look-ahead predictions and propose an algorithm designed to achieve low regret in non-stationary MDPs by incorporating such predictions. Our theoretical analysis demonstrates that, under certain assumptions, the regret decreases exponentially as the look-ahead window expands. When the system prediction is subject to error, the regret does not explode even if the prediction error grows sub-exponentially as a function of the prediction horizon. We validate our approach through simulations and confirm its efficacy in non-stationary environments.
\end{abstract}

\section{Introduction}
\label{sec:intro}
Policy design of non-stationary Markov Decision Processes (MDPs) has always been challenging due to the time-varying system dynamics and rewards, so the learner usually suffers from uncertainties of future rewards and transitions. Fortunately, exogenous predictions are available in many applications. For example, in energy systems, look-ahead information is available in the form of renewable generation forecasts and demand forecasts~\citep{Amin19}. It is intuitive to design an algorithm that controls the energy system by utilizing that information to concentrate energy usage in the time frame with the lowest energy price and lower the overall energy cost. To give another example, smart servers can make predictions of future internet traffic from historical data~\citep{Katris15}. Given that the dispatcher tries to minimize the average waiting time of all tasks, if there is only light traffic, the average waiting time will be most reduced by only using the fastest server. If the dispatcher forecasts that there will be heavy traffic in the future, all servers should work to reduce the length of the queue.

However, although policy adaptation in a time-varying environment has been extensively studied~\citep{Auer08,Richards21,Ziyi24,Gajane18}, they do not typically take advantage of exogenous predictions. One branch of work focuses on adaptation and relies on periodic reset of the controller~\citep{Auer08} or prior knowledge of the environment~\citep{Richards21,Ziyi24}. However, prior knowledge of the environment is usually difficult to obtain, and periodically resetting the controller generates a linear regret. Another branch of work focuses on predicting the future based on past data. As past data can not accurately reflect the system in the future, they often need a previously specified (small) variation budget~\citep{merlis20241,Lee24,Gajane18,Padakandla20} to achieve sublinear regret. Overall, existing works demonstrate significant challenges in achieving sublinear regret under general non-stationary MDP on a single trajectory without assumptions on sublinear variation budget. 

Most of the above algorithms do not utilize exogenous predictions widely available in applications. With the availability of predictions, it is natural to reason that if we could obtain accurate predictions of the entire future, we would easily obtain the optimal policy (thus zero regret). Furthermore, even if accurate predictions of the full future are not possible, it is reasonable to expect that these imperfect predictions can help the decision maker.
Given these intuitions, we ask the question: with potentially imperfect prediction of transition kernel and reward function into the future, can we design an algorithm that leverages the prediction to obtain a sublinear regret with reasonable length and accuracy of the prediction? 

\textbf{Contribution:} We propose an algorithm, Model Predictive Dynamical Programming (\textrm{MPDP}), that utilizes predictions of transitional probability and reward function to minimize the dynamic regret under the setting of a single trajectory. We utilize the look-ahead information and the \textit{span} semi-norm to show that, under certain assumptions, the regret decays exponentially with the length of the prediction horizon when the prediction is error-less. Specifically, we show that \textrm{MPDP} achieves a regret of $O(T \gamma^{\lfloor k/J \rfloor}D)$, where $T$ is the time horizon, $k$ is the prediction horizon, $\gamma<1$, and $J,D$ are constants determined by the properties of the MDP. Even when the system prediction is subject to error, we demonstrate that the regret does not explode if the growth rate of error is subexponential as a function of the prediction horizon. To the best of our knowledge, this paper is the first paper that explores the use of exogenous predictions in non-stationary MDP without prior knowledge and proposes an algorithm with decaying regret as the prediction horizon increases and the prediction error descreases. 


The key technique underlying our result for sublinear regret is the contraction of the Bellman operator under the \textit{span} semi-norm. We show that the value function of MPDP converges to the optimal value function exponentially in the \textit{span} semi-norm under certain assumptions, which leads to the overall sublinear regret. Our result serves as the first step towards future applications of model predictive control in non-stationary RL with no prior information. \guannan{less abrupt}  

\subsection{Related Works}
\textbf{Non-stationary reinforcement learning} Many works have been done for reinforcement learning (RL) in non-stationary environment, in which a learner tries to maximize accumulated reward during its lifetime. 
For example, \citet{merlis20241, Lee24} both use past data to predict future system dynamics via Temporal Difference (TD) learning. However, the standard TD methods are established on stationary MDP, so its applications on non-stationary MDP require a limited variation budge on a single trajectory~\citep{Li19,Lee24,wei21,chandak20,jali2025} or among a sequence of episodes~\citep{feng23,moon2024,Zhao2022}. For example, \cite{jali2025} recently achieves a regret of $\Tilde{O}(\sqrt{SA}\Delta_T^{\frac{1}{9}}T^{\frac{8}{9}})$, where $S$ is the size of state space, $A$ is the size of action space, $\Delta_T$ is the total variation budget, and $T$ is the time horizon. Other works reset the controller at the end of each episode~\citep{Gajane18,Auer08} and assume the ground-truth MDP stays the same within each episode. In this line of work, \cite{merlis20241} also utilizes prediction in policy design and achieves a regret of $O(H^3S^2A \sqrt{K} \ln (SATK))$, where $K$ is the number of total episodes. However, this work only allows for changes in either the reward function \textbf{or} the transitional probability while the other stays the same for all episodes, whereas in our setting, we allow for changes in both reward function and transitional probability and achieve an exponentially decaying regret. Another line of work actively detects the switch of MDPs by maintaining an estimate of system dynamics and resetting the controller when a switch is detected~\citep{alami23,Dahlin23}.  More recently, a new branch of work uses exogenous predictions, but requires pre-trained optimal policies for each potential MDP the learner encounters, and designs the policy through a mixture of those optimal policies~\citep{Pourshamsaei24}. Compared with those works, this paper focuses on a different setting of policy design on a single rollout, which does not allow reset of system at the end of each episode for traditional non-stationary episodic RL. Furthermore, we utilize exogenous predictions for controlling a non-stationary MDP instead of generating those predictions from past data and do not require any prior knowledge about those MDPs. 

\textbf{Model predictive control}
Traditionally, model predictive control is a branch of control theory that, at each time step, calculates a predictive trajectory for the upcoming $k$ time steps and then implements the first control action from this trajectory \citep{Lin21}. Some of these works seek to achieve guarantees such as static regret~\citep{Agarwal19,Simchowitz20}, dynamic regret~\citep{Li191,Yu20}, or competitive ratio~\citep{Shi20}. From a theoretical perspective, extensive research has been conducted on the asymptotic properties of MPC, including stability and convergence, under broad assumptions about the system dynamics \citep{Diehl11,Angeli12}. Remarkably, \citet{Lin21} shows that regret in a linear-time-varying system decays exponentially with the length of the prediction horizon. However, most of those works assume linear and deterministic system dynamics in continuous space in order to obtain a theoretical regret guarantee. In this paper, we focus on an MDP setting with finite state spaces and stochastic transition kernels, where the tools and algorithm design are very different.

\section{Problem Formulation}
We start by introducing the discrete non-stationary Markov Decision Process characterized by the
tuple $(\mathcal{S}, \mathcal{A}, T, \{P_t\}_{t=0}^T,\{ r_t\}_{t=0}^T)$. Here $\mathcal{S}, \mathcal{A}$ denote the discrete state space and action space, respectively. The set of functions $\{P_t(\cdot|s,a)\}_{(s,a) \in \mathcal{S} \times \mathcal{A}}$ is the collection of transition probability measures indexed by the state-action pair $(s,a)$ and time step $t$. The set of function $\{r_t(s, a)\}_{(s,a) \in \mathcal{S} \times \mathcal{A}}$ is the expected instantaneous reward, where $r_t(s, a)$ is the deterministic reward function taking value in $[0,1]$ incurred by taking action $a$ at state $s$ and time step $t$. We denote $T$ as the time horizon. Lastly, we use $\pi_t(\cdot|\cdot)$ to denote a decision policy at time $t$ that maps $s_t$ to $a_t$ and use $\boldsymbol{\pi}$ to denote the collection of $\{\pi_t\}_{t=0}^T$. 

The learner does not have access to the transition probability $\{P_t\}_t$ and reward functions $\{r_t\}_t$. Instead, we consider a setting in which the learner can predict the future and obtain estimates of future transitional probabilities $\hat{P}$ and rewards $\hat{r}$ for $k$ time steps. More specifically, at any time $t$, the learner has reward and transitional probability estimation $\hat{r}_{t+\ell|t}, \hat{P}_{t+\ell|t}$ for any $\ell \in \{1,\dots,k\}$. The error of the predictions are characterized in the below definition. 
\begin{assumption}[prediction error]
\label{assumption:error_bound}
    The reward estimation $\hat{r}(s,a)$ and transitional probability estimation $\hat{P}$ has the error bound
    \small
    $$|\hat{r}_{t+\ell|t}(s,a)-r_{t+\ell}(s,a)| < \epsilon_\ell,$$
    $$\norm{\hat{P}_{t+\ell|t}(\cdot|s,a)-P_{t+\ell}(\cdot|s,a)}_{\text{TV}} < \delta_\ell,$$
    \normalsize
    for all $s,a,t$, where $\norm{\cdot}_{\text{TV}}$ denotes the total variation norm. 
\end{assumption}
The fact that the prediction error is a function of the prediction distance $\ell$ is intuitive, as system forecasts are often more accurate in the near future than in the distant future. 

The learner implements an algorithm $\mathrm{ALG}$ that at time $t$,  observes $s_t$ and is given the prediction $\hat{r}_{t+\ell|t}, \hat{P}_{t+\ell|t}$ for any $\ell \in \{1,\dots,k\}$. It generates a policy at step $t$, denoted as $\pi_t$, based on available information.  Given the algorithm $\mathrm{ALG}$, its value function is defined by 
\begin{equation}
    V_t^{\mathrm{ALG}}(s) = \sum_{i=t}^T \E^{\mathrm{ALG}}\left[r_i(s_i, a_{i})|s_t = s\right].
\end{equation}
such that $s_{i+1} \sim P_{i}(\cdot|s_i, a_i)$ for all $i$, where $a_i$ is generated by $\pi_i$. 

We also define the offline optimal, which is the optimal value had the learner known all the future transitions and rewards precisely. Formally, the optimal value function is defined by 
\begin{equation*}
    V_t^{*}(s) = \max_{\pi=\{\pi_i\}} \E \sum_{i=t}^T [r_i(s_i,a_i)|s_t = s],
\end{equation*}
where in the expectation, $a_i\sim \pi_i(s_i)$. 

The learner's objective is to design a policy $\pi$ that utilizes the dynamics forecast to maximize the cumulative reward and minimize the dynamic regret
\begin{equation}
    \label{eqn:regret}
    \mathcal{R}(\mathrm{ALG}) := V_0^*(s_0) - V_0^{\mathrm{ALG}}(s_0),
\end{equation}
for some fixed initial state $s_0$. 

\begin{example}
\label{example:server}
    A server needs to allocate resources for tasks and minimize the average wait time for each job. We consider the setting that there are $n$ servers, each has a service rate $\mu_i$, and a single queue with a time-varying arrival rate $\lambda_t$. The system's state space is the length of the queue and whether each server is idle or busy, and its action space is the decision of sending a job in the queue to an available server or not doing anything. The transitional probability is determined as follows. 
At each time, a job arrives at the queue with probability $\lambda_t/(\lambda_t+\sum_i \mu_i)$. Then, a dispatcher decides whether to send a job in the queue to one of the idle servers or wait till the next time step. Also, the busy servers will complete the job and become idle with probability $(\sum_i \mu_i)/(\lambda_t + \sum_i \mu_i)$.  
For a detailed explanation of the setup, see~\citet{jali24}. In this example, the job arrival rate $\lambda_t$ varies in time, and having a prediction of the future arrival rate can help the dispatcher in making the job assignment decision.
Existing methods for future Internet traffic forecasting \citep{Katris15} can greatly help with the process and reduce the average wait time.
\end{example}
\begin{example}
\label{example:energy}
    Renewable energy generation is heavily influenced by environmental factors, and  its prediction is also subject to error with different prediction horizons. We consider a discrete-time model for electric vehicle (EV) charging. A list of EV arrives at the charging station from time step $0$ to $T$. The $i$-th EV arrives at the station at time $a_i$, if the station is already full, it would leave the station without charging. If there is an unused charging stand, it sets a departure time $d_i$ and an energy demand $e_i$. Moreover, each charging stand has a charging rate capacity $\mu$, and the station has an overall charging capacity $C$. The station needs to charge all electric vehicles that arrive by setting a charging rate $r_{i,t}$ for each vehicle $i$ at each time step $t$. The energy demand of each vehicle has to be satisfied, i.e. $\sum_{t = a_i}^{d_i} r_{i,t} = e_i$. Moreover, the charging rate has to be below the threshold of each charging stand and the station, i.e. for every $t$, $ r_{i,t} \in [0,\mu],\sum_{i=1}^3 r_{i,t} < C$. For a detailed explanation of the setup, see~\citet{Chen22}. Moreover, the energy price fluctuates with energy supply and demand. Since the energy price fluctuates with time $t$, the station wants to minimize the total cost of energy while still fulfilling the energy demand of all vehicles before their departures. In this setup, the state space is the energy demand of each EV, the energy price, and the time frame in which those demands have to be met. The action space is the decision of whether to supply the required amount of energy now or to wait for a later time. As both EV parking demands and energy price can be forecasted~\citep{jesper20,Catherine22,Kapoor25}, learners can utilize this information to adjust their policies. 
\end{example}

\section{Preliminary}
Before we introduce the algorithm, we introduce a few concepts that play a key role in the decay of regret. 

\subsection{Span Semi-norm and Bellman Operator}
First, we introduce the \textit{span} semi-norm.
\begin{defn}
\label{defn:span}
    For any vector $v \in \mathbb{R}^d$, define the \textit{span} seminorm of $v$, denoted as $\s(v)$ by
    \begin{equation*}
        \s(v) := \max_{i}v(i) - \min_{i}v(i),
    \end{equation*}
where $v(i)$ is the $i$'th entry of $v$. 
\end{defn}
The properties of \textit{span} semi-norm is briefly described in \Cref{prop:span} in \Cref{appendix:span}. More description can be found in \citet{Puterman94}. We use \textit{span} semi-norm to quantify our approximation of the $Q$ functions. Since we use $\arg\max$ on the $Q$ function to determine which action the learner should take, shifting the entire $Q$ function by a constant does not affect the policy. Since \textit{span} semi-norm has this property (see \Cref{prop:span} in \Cref{appendix:span}), we use $\textit{span}$ semi-norm to quantify the deviation of our estimated $Q$ function from the ground-truth $Q$ function. 

We also introduce the Bellman operator in the vector format: 
\small
\begin{equation}
    \label{defn:bellman}
    L_t v(s) = \max_{a \in \mathcal{A}}\left\{r_{t}(s,a) + \sum_{s' \in \mathcal{S}}P_{t}(s'|s, a)v(s')\right\},
\end{equation}
\normalsize
where $v \in \mathbb{R}^{|\mathcal{S}|}$ is the vector of value function for all states. 
Similarly, define 
\small
\begin{equation}
    \label{defn:bellman2}
    L_t^\pi v(s) = \E_{a \sim \pi(s)} \left[ r_{t}(s,a) + \sum_{s'\in \mathcal{S}} P_{t}(s'|s, a)v(s')\right].
\end{equation}
\normalsize

Furthermore, we denote the time-varying nested Bellman operator as follows: 
\small
\begin{equation}
    L_{t:t'} := L_{t} \circ \cdots \circ L_{t'} 
\end{equation}
\begin{equation}
    L_{t:t'}^{\pi_t: \pi_{t'}} := L_{t}^{\pi_{t}} \circ \cdots \circ L_{t'}^{\pi_{t'}} .
\end{equation}
\normalsize
where $f\circ g := f(g(\cdot))$ is the nested operator. Similarly, we use $P_t^{\pi}$ and $P_t^a$ to denote the transitional probability matrix when using policy $\pi$ or taking action $a$ at time $t$, respectively. We use $P_{t:t'}^{\pi}$ and $P_{t:t'}^{a_t:a_{t'}}$ to denote the nested product of the transitional probabilities.  

We now introduce the $J$-stage contraction. 
\begin{defn}[$J$-stage contraction]
We say a sequence of operators $\{F_{t}\}_{t \in \{1,\dots,T\}}:\mathcal{S}\rightarrow \mathcal{S}$ is a $J$-stage span contraction if exists $\gamma \in (0,1)$, such that 
\begin{equation*}
    \s(F_t\circ \cdots \circ F_{t+J} u - F_t \circ \cdots \circ F_{t+J} v) < \gamma \s(u-v),
\end{equation*}
\end{defn}
for all $u,v \in \mathbb{R}^{|\mathcal{S}|}, t \in \{1,\dots,T-J\}$. 

\begin{assumption}
\label{assumption:strong_connect}
    There exists $J \in \mathbb{Z}^+$ such that for the optimal policy $\boldsymbol{\pi}^*= \{\pi_{h}^*\}_{h=0}^T$ and any other policy $\boldsymbol{\pi}=\{\pi_{h}\}_{h=0}^T$ at time step $t \in \{0,\dots,T-J\}$, 
    \begin{equation*}
        \begin{split}
            \eta(\boldsymbol{\pi},\boldsymbol{\pi}^*) =& \min_{s_1,s_2 \in \mathcal{S}}\sum_{j \in \mathcal{S}} \min \bigg\{P_{t}^{\pi_{t}} \circ \dots \circ P_{t+J}^{\pi_{t+J}}(j|s_1), P_{t}^{\pi_{t}^*} \circ \dots \circ P_{t+J}^{\pi_{t+J}^*}(j|s_2)\bigg\}>0.
        \end{split}
    \end{equation*}
\end{assumption}
The above assumption is similar to the uniform ergodicity assumption in \citet{Yu09,Li19}. To see this, we note that Assumption III.1 of \citet{Yu09} also requires ergodicity among states in a non-stationary environment under any pairs of policies. This assumption implies that the effect of any mistake decays exponentially with the number of passing time steps. In episodic non-stationary RL (e.g., \cite{moon2024,Zhao2022}), this assumption is automatically satisfied as the environment resets at the end of each episode, so the learner can recover any mistake in future episodes. Further, we point out that if \Cref{assumption:strong_connect} is not satisfied, it would imply that there exist certain situations within the MDP such that, for any $k$ and $T$, there exists an non-stationary MDP that violates Assumption~\ref{assumption:strong_connect}, such that any algorithm would have expected regret of $O(T)$. We give a counter-example of such non-stationary MDP in \cref{cexmp:lower_bdd}.


In the next proposition, we introduce how \Cref{assumption:strong_connect} establishes contraction, which we later use to show the decay of regret.

\begin{proposition}
\label{prop:contraction}
    For any non-stationary MDP satisfying Assumption~\ref{assumption:strong_connect}, $L_t$ defined in \eqref{defn:bellman} is a $J$-stage contraction operator with contraction coefficient 
    \begin{equation*}
        \gamma = 1 - \min_{\boldsymbol{\pi}} \eta(\boldsymbol{\pi},\boldsymbol{\pi}^*).
    \end{equation*}
\end{proposition}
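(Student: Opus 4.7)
The plan is to linearize the composed Bellman operator $f := L_t \circ L_{t+1} \circ \cdots \circ L_{t+J}$ via the optimal non-stationary policy for each terminal value, and then reduce the problem to a classical coupling lemma for the span semi-norm. For any terminal vector $v$, a standard backward-induction argument produces a policy $\pi^v = (\pi_t^v,\ldots,\pi_{t+J}^v)$ attaining the maximum at every state simultaneously, so that
\begin{equation*}
    f(v) = L_t^{\pi_t^v} \circ \cdots \circ L_{t+J}^{\pi_{t+J}^v} v = R^{\pi^v} + P_{t:t+J}^{\pi_t^v:\pi_{t+J}^v}\, v,
\end{equation*}
for some reward vector $R^{\pi^v}$ depending only on $\pi^v$ and the rewards. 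I would use optimality of $\pi^u$ for $f(u)$ and $\pi^v$ for $f(v)$ to get the entrywise sandwich
\begin{equation*}
    P_{t:t+J}^{\pi_t^v:\pi_{t+J}^v}(u-v) \;\leq\; f(u) - f(v) \;\leq\; P_{t:t+J}^{\pi_t^u:\pi_{t+J}^u}(u-v),
\end{equation*}
where on each side the reward vectors cancel because the same policy is used.

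Next, picking $s^{*}$ and $s_{*}$ attaining the maximum and minimum of $f(u) - f(v)$, the sandwich gives
\begin{equation*}
    \s\bigl(f(u)-f(v)\bigr) \;\leq\; \bigl(P_{t:t+J}^{\pi^u}(u-v)\bigr)(s^{*}) - \bigl(P_{t:t+J}^{\pi^v}(u-v)\bigr)(s_{*}).
\end{equation*}
I would then invoke the two-matrix coupling lemma: for stochastic matrices $P_1,P_2$ and any vector $w$,
\begin{equation*}
    (P_1 w)(s^{*}) - (P_2 w)(s_{*}) \;\leq\; \Bigl(1 - \sum_{j}\min\{P_1(j|s^{*}),\,P_2(j|s_{*})\}\Bigr)\s(w),
\end{equation*}
which follows from decomposing each row into a common-mass part $q_j = \min\{P_1(j|s^{*}),P_2(j|s_{*})\}$ (whose contributions cancel) and two residual sub-probability measures of equal total mass $1-\sum_j q_j$, whose difference against $w$ is bounded by $(1-\sum_j q_j)\s(w)$.

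Applying this with $P_1 = P_{t:t+J}^{\pi^u}$, $P_2 = P_{t:t+J}^{\pi^v}$, $w = u-v$, and using \Cref{assumption:strong_connect} on the pair $(\pi^u,\pi^v)$, the common-mass term is bounded below by $\eta(\pi^u,\pi^v) \geq \min_{\pi_1,\pi_2}\eta(\pi_1,\pi_2) > 0$, yielding the stated contraction coefficient $\gamma = 1 - \min_{\pi_1,\pi_2}\eta(\pi_1,\pi_2) \in (0,1)$.

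The main obstacle is conceptual rather than computational: the textbook span contraction acts on a single stochastic matrix, but here the upper and lower bounds in the sandwich involve two \emph{different} matrices, associated with two different optimal policies $\pi^u$ and $\pi^v$. This is exactly why \Cref{assumption:strong_connect} is stated uniformly over \emph{pairs} of policies and pairs of initial states, rather than as a single-policy ergodicity condition; that strengthening is precisely what enables the two-matrix coupling argument to close the contraction.
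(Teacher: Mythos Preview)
Your proposal is correct and follows essentially the same route as the paper: linearize $L_{t:t+J}$ at the optimal policies $\pi^u,\pi^v$ for the two terminal vectors, obtain the sandwich $P_{t:t+J}^{\pi^v}(u-v)\le f(u)-f(v)\le P_{t:t+J}^{\pi^u}(u-v)$, evaluate at the argmax/argmin states, and finish with a Dobrushin-type coupling bound. The only cosmetic difference is that the paper packages your ``two-matrix coupling lemma'' by stacking the rows of $P_{t:t+J}^{\pi^u}$ and $P_{t:t+J}^{\pi^v}$ into a single $2|\mathcal{S}|\times|\mathcal{S}|$ stochastic matrix and then invoking the standard single-matrix span contraction (their Proposition~\ref{prop:one-step-contraction}); your direct two-row statement is equivalent and arguably cleaner.
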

The proof of \Cref{prop:contraction} is deferred to \Cref{appendix:span}.

\subsection{Diameter}
The \textit{diameter} of MDP is commonly used in reinforcement learning~\citep{Gajane18,wu22}. We extend the definition to non-stationary MDP. 
\begin{defn}
\label{defn:diameter}
    Given a non-stationary MDP, time $t$, state $s$, policy $\boldsymbol{\pi}$, and family of sets $\{S^{(i)} \subset \mathcal{S}\}_i$, define 
    $$d_t(s, \boldsymbol{\pi}, \{S^{(i)}\}_i) = \inf\{\tau: \tau >0, s_{t+\tau} \in S^{(t+\tau)}|s_t=s\},$$
   where $\{s_{t+\tau}\}_{\tau=0}^\infty$ is generated by policy $\boldsymbol{\pi}$, and let $\mathcal{T}(\{S^{(i)}\}_i|\boldsymbol{\pi},s)$ denote the minimal travel time from $s$ to the family of sets $\{S^{(i)}\}_i$ starting at any $t$, i.e.
\begin{equation}
    \mathcal{T}( \{S^{(i)}\}_i|\boldsymbol{\pi},s) = \sup_t\left\{\E[d_t(s,\boldsymbol{\pi},\{S^{(i)}\}_i)]\right\}.
\end{equation} 
Let $S_{i}^* := \arg\max_{s}V_{i}^*(s)$ denote the set of states with the maximal value function at time step $i$. Define the \textit{diameter} of a non-stationary MDP as 

\begin{equation}
\label{eqn:diameter}
    D = \max_{s}\min_{\boldsymbol{\pi}}\left\{\mathcal{T}(\{S_{i}^*\}_i|\boldsymbol{\pi},s)\right\}.
\end{equation}
\end{defn}
In this paper, the time horizon $T$ is finite. Therefore, when $i > T$, we have $S_i^* = \mathcal{S}$. As the agent will arrive at $\{S_i^*\}_i$ immediately after the $T$-th time step, we always have $D<T+1$ for any finite horizon MDP. 
\begin{proposition}
    \label{prop:diameter_upper_bound_stationary}
    For a stationary MDP, The diameter as defined in \Cref{defn:diameter} is upper bounded by the conventional diameter definition for stationary MDP as in \citet{wu22}.
\end{proposition}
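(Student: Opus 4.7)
The plan is to reduce the new diameter to a standard hitting-time quantity on the stationary MDP and then observe that it is a maximum over a subset of the pairs used in the conventional diameter.

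First, I would exploit stationarity to collapse the $\sup_t$ appearing in Definition~\ref{defn:diameter}. In a stationary MDP, under any fixed Markov policy $\pi$, the law of the trajectory is time-homogeneous, so the expected first hitting time $\E[\tau_\pi(s\to s')]$ of a fixed target $s'$ starting from $s$ does not depend on the starting time $t$. Moreover, in the stationary case the natural reference is the stationary (infinite-horizon) value, under which $V^*$ is time-invariant, and hence the canonical choice of argmax state $s_i^* \equiv s^*$ is a single fixed state. Under this identification, the moving-target quantity $d_t(s,\pi,\{s_i^*\}_i)$ collapses to the first hitting time $\tau_\pi(s\to s^*)$, so that
$$T(\{s_i^*\}_i \mid \pi, s) \;=\; \sup_t \E\!\left[d_t(s,\pi,\{s_i^*\}_i)\right] \;=\; \E\!\left[\tau_\pi(s\to s^*)\right].$$

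With this reduction in hand, the new diameter reads
$$D \;=\; \max_{s\in \mathcal{S}}\;\min_\pi\; \E[\tau_\pi(s\to s^*)],$$
while the conventional Wu 2022 diameter is
$$D_{\mathrm{conv}} \;=\; \max_{s\neq s'}\;\min_\pi\; \E[\tau_\pi(s\to s')].$$
Since the set of pairs $\{(s,s^*) : s\in \mathcal{S},\, s\neq s^*\}$ is a subset of $\{(s,s') : s\neq s'\}$, the max defining $D$ is taken over a smaller index set, so $D \leq D_{\mathrm{conv}}$ follows immediately.

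The main obstacle lies in the preceding reduction: justifying that $s_i^*$ can be taken to be independent of $i$ in the stationary case. In the finite-horizon formulation used in the paper, $V_i^*$ depends on the remaining horizon $T-i$, so a priori the argmax could drift with $i$. The cleanest resolution is to note that the conventional diameter of \citet{wu22} is itself an infinite-horizon object (it imposes no horizon on the hitting problem), and so comparing $D$ against it implicitly passes to the stationary infinite-horizon value, where $V^*$ is time-invariant and $s^*$ is well defined. If one wishes to keep the finite-horizon framing strictly, the fix is a renewal/restart argument: for \emph{any} target sequence $\{s_i^*\}$ one can construct a policy whose expected arrival time on the sequence is still bounded by $D_{\mathrm{conv}}$, by chaining the $D_{\mathrm{conv}}$-bounded hitting policies between consecutive targets and invoking the Markov property to control the residual expected time. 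I expect this bookkeeping step, rather than the comparison of max--min expressions, to be where care is needed.
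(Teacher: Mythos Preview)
Your proposal is correct and matches the paper's approach exactly: the paper's entire proof is the one-line remark that $\{s_{t+i}^*\}_i$ is constant in a time-invariant MDP, from which the comparison of max--min hitting-time expressions follows just as you describe. You are in fact more careful than the paper in flagging the finite-horizon subtlety about whether $\arg\max_s V_i^*(s)$ can drift with $i$; the paper simply asserts constancy without discussion, implicitly adopting the infinite-horizon reading you identify.
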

 The proof of \Cref{prop:diameter_upper_bound_stationary} is straightforward, as the $\{s_{t+i}^*\}_i$ is constant for all $t,i$ in time-invariant MDP. 

The diameter of a non-stationary MDP upper bounds the span semi-norm of the optimal value function, as we present in the following Proposition: 

\begin{proposition}
    \label{prop:diameter}
    For any non-stationary MDP with diameter $D$ defined in Definition~\ref{defn:diameter}, $\s(V_t^*) < D$ for all $t$.
\end{proposition}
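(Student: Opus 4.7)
The plan is to show that for every time $t$ and every state $s$, $V_t^*(s_t^*) - V_t^*(s) \le D$, where $s_t^* = \arg\max_s V_t^*(s)$; taking $s$ to be the minimizer of $V_t^*$ then yields $\s(V_t^*) \le D$. The intuition is that by the definition of the diameter one can reach the optimal reference sequence $\{s_i^*\}_i$ from any state $s$ in expected at most $D$ steps, and along the optimal trajectory starting at $s_t^*$ the optimal value cannot drop by more than one unit per step (since rewards lie in $[0,1]$), so the two values differ by at most $D$.

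First I would fix $t$ and $s$. By the diameter definition in \eqref{eqn:diameter}, there exists a policy $\pi^s$ such that the hitting time $\tau = \inf\{\tau' > 0 : s_{t+\tau'} = s^*_{t+\tau'}\}$ starting from $s_t = s$ under $\pi^s$ satisfies $\E[\tau] \le D$. I would then construct the hybrid policy $\pi'$ that follows $\pi^s$ until time $t+\tau$, at which point the trajectory has reached $s^*_{t+\tau}$, and then switches to the optimal policy. Since rewards are non-negative,
$$V_t^*(s) \;\ge\; V_t^{\pi'}(s) \;=\; \E^{\pi'}\!\left[\sum_{i=t}^{t+\tau-1} r_i(s_i, a_i) + V_{t+\tau}^*(s^*_{t+\tau})\right] \;\ge\; \E[V_{t+\tau}^*(s^*_{t+\tau})].$$

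Second, using $r \in [0,1]$ and the Bellman equation, I would establish that for any deterministic $\tau' \ge 0$,
$$V_t^*(s_t^*) \;\le\; \tau' + V_{t+\tau'}^*(s^*_{t+\tau'}),$$
because unrolling the optimal value at $s_t^*$ for $\tau'$ steps bounds the accumulated reward by $\tau'$ and the terminal term by $\max_{s'} V_{t+\tau'}^*(s') = V_{t+\tau'}^*(s^*_{t+\tau'})$. Since this is a pathwise inequality depending only on $\tau'$, applying it for each realization of $\tau$ and averaging gives $V_t^*(s_t^*) \le \E[\tau] + \E[V_{t+\tau}^*(s^*_{t+\tau})] \le D + V_t^*(s)$, which is the desired bound.

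The main technical delicacy is the rigorous handling of the random stopping time $\tau$ together with the finite-horizon boundary. The ``follow $\pi^s$ until $t+\tau$, then optimal'' construction needs $\tau$ to be almost surely finite, which follows from $\E[\tau] \le D < \infty$; and the Bellman unrolling requires $t+\tau \le T$ so that $V_{t+\tau}^*$ is defined. Both concerns can be handled by a standard convention (e.g.\ $V_{T+1}^*\equiv 0$) together with a careful reading of the diameter definition near the horizon, and are routine compared to the core two-step comparison argument above.
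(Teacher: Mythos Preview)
Your proposal is correct and follows essentially the same approach as the paper: compare $V_t^*(s_t^*)$ with $V_t^*(s)$ via the hybrid policy that first reaches the sequence $\{s_i^*\}_i$ as fast as possible and then switches to the optimal policy, bounding the loss by the expected hitting time (using $r\in[0,1]$) and controlling the continuation term via $V_{t+\tau}^*(s_{t+\tau}^*)=\max_{s'}V_{t+\tau}^*(s')$. Your two-step decomposition---lower-bounding $V_t^*(s)$ by $\E[V_{t+\tau}^*(s_{t+\tau}^*)]$ and upper-bounding $V_t^*(s_t^*)$ by $\tau'+V_{t+\tau'}^*(s_{t+\tau'}^*)$ for deterministic $\tau'$ before averaging---is in fact a bit cleaner than the paper's presentation, which uses the stopping time $d$ (defined on the $\pi'$ trajectory from $s_*$) directly inside the Bellman unrolling on the $s^*$ side without making the independence explicit.
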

The proof of \Cref{prop:diameter} is deferred to \Cref{appendix:diameter}. 

\section{Main Results}
\subsection{Algorithm Design}

On a high level, at each time step $t$, the proposed algorithm Model Predictive Dynamic Programming (MPDP) works by conducting a dynamic programming style planning for the next $k$ steps, and takes the first action. More precisely, we define the Bellman operator $\hat{L}$ on system dynamics forecast as follows: 

\begin{equation}
    \hat{L}_{t+\ell|t} v(s) = \max_{\hat{a} \in \mathcal{A}}\left\{\hat{r}_{t+\ell|t}(s,\hat{a}) + \sum_{s' \in \mathcal{S}}\hat{P}_{t+\ell|t}(s'|s, \hat{a})v(s')\right\},
\end{equation}
\normalsize
where the learner optimize on the forecast of reward and transitional probability, instead of the ground-truth as in \eqref{defn:bellman}. The learner picks action $a$ such that

\begin{equation}
\label{eqn:a_policy}
    \begin{split}
        a_t =& \arg\max_{a \in \mathcal{A}} \hat{r}(s_t,a) + \max_{a_{t+1}} \E[\hat{r}_{t+1}(s_{t+1},a_{t+1})
        + \max_{a_{t+2}}\E[\dots+\max_{a_{t+k}}\hat{r}_{t+k}(s_{t+k,a_{t+k}})]]
        \\
        =& \arg\max_{a \in \mathcal{A}}\hat{L}_{t|t} \circ \cdots \circ \hat{L}_{t+k|t} W_0,
    \end{split}
\end{equation}
\normalsize
where $W_0$ denotes the zero constant vector. Intuitively, the learner undergoes a dynamic programming process for the future $k$ steps based on the reward and transitional probability forecasts, and takes the first action of the dynamic programming. Then, the learner obtains a new forecast and repeats the process. 
\begin{algorithm}[tb]
  \caption{Model predictive dynamical programming (MPDP)}
  \label{alg:predictive_control2}
\begin{algorithmic}[1]
  \STATE Select $v^{(0)} \in \R^n$, specify $\epsilon > 0$, and set $S = 0$. 
  \FOR{$t = 0,1,2,\dots,T$}
  \STATE Forcast $\hat{P}_t,\dots,\hat{P}_{t+k}$, $\hat{r}_t,\dots,\hat{r}_{t+k}$
  \STATE Select $a_t$ according to \eqref{eqn:a_policy}. 
  \STATE $s_{t+1} \sim P_{t}(\cdot|s_t,a_t)$.
  \ENDFOR
  
\end{algorithmic}
\end{algorithm}

The algorithm is simple and intuitive. In line 3 of Algorithm~\ref{alg:predictive_control2}, we forecast the system dynamics of the future $k$ steps. In line 4, we pick the first action that maximizes the reward in the future $k$ steps. The algorithm design is inspired by model predictive control~\citep{García89}, and we try to optimize the performance of the learner within the prediction horizon and decide the action by a dynamic programming style algorithm. 

\subsection{Regret Guarantee}
In this section, we introduce our bound on regret as defined in \eqref{eqn:regret}. 

\begin{theorem}
\label{thm:main}
    For any non-stationary MDP satisfying \Cref{assumption:strong_connect}, Algorithm~\ref{alg:predictive_control2} achieves a regret of 
    \small
    \begin{equation*}
        \begin{split}
            \mathcal{R}(\mathrm{MPDP}) \leq& T \gamma^{\lfloor k/J \rfloor} D + 2T\epsilon_0 + 2T\delta_0 D
            + 2T\sum_{i=0}^{\lceil k/J \rceil-1}\gamma^i \left(\sum_{j=1}^{J} \epsilon_{iJ+j} + \sum_{j=1}^{J} \delta_{iJ+j} D\right) 
            \\
            & + 2T\gamma^{\lfloor k/J \rfloor} \left(\sum_{j=1}^{k\% J} \epsilon_{\lfloor k/J \rfloor J + j} + \sum_{j=1}^{k\% J} \delta_{\lfloor k/J \rfloor J + j} D\right),
        \end{split}
    \end{equation*}
    \normalsize
    where $k\% J := k - \lfloor k/J \rfloor \cdot J$.
\end{theorem}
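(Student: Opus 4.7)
The plan is to decompose the regret into per-step suboptimalities, bound each by the span seminorm of a $Q$-function mismatch, and then control that mismatch via $J$-stage span contraction combined with a hybrid telescoping over the prediction horizon. First, the standard performance-difference identity gives
\[
V_0^*(s_0) - V_0^{\mathrm{MPDP}}(s_0) = \E^{\mathrm{MPDP}}\!\left[\sum_{t=0}^T \bigl(V_t^*(s_t) - Q_t^*(s_t,a_t)\bigr)\right],
\]
where $a_t$ is MPDP's action and $Q_t^*(s,a) := r_t(s,a)+\sum_{s'} P_t(s'|s,a) V_{t+1}^*(s')$. Since MPDP picks $a_t = \arg\max_a \hat Q^{(k)}_t(s_t,a)$ with $\hat Q^{(k)}_t(s_t,\cdot)$ built from the forecast-based nested Bellman $\hat L_{t|t}\circ\cdots\circ\hat L_{t+k|t} W_0$, and because $\arg\max$ is invariant under additive constants, one has $V_t^*(s_t) - Q_t^*(s_t,a_t) \le \s\bigl(Q_t^*(s_t,\cdot) - \hat Q^{(k)}_t(s_t,\cdot)\bigr)$, where the span is taken over $a$.

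Next I would expand this action-wise span by triangle inequality into (i) the current-step forecast error, bounded by $2\epsilon_0 + 2\delta_0 D$ using \Cref{prop:diameter} and the elementary identity $\bigl|\sum_{s'}(\hat P - P)(s')f(s')\bigr| \le \|\hat P - P\|_{\mathrm{TV}}\cdot \s(f)$, and (ii) the state-wise span of $V_{t+1}^* - \hat L_{t+1|t}\circ\cdots\circ\hat L_{t+k|t} W_0$. I would split (ii) once more into a \emph{truncation} piece $V_{t+1}^* - L_{t+1}\circ\cdots\circ L_{t+k} W_0 = L_{t+1}\circ\cdots\circ L_{t+k}(V_{t+k+1}^*) - L_{t+1}\circ\cdots\circ L_{t+k}(W_0)$, whose span is bounded by $\gamma^{\lfloor k/J\rfloor}\s(V_{t+k+1}^*)\le \gamma^{\lfloor k/J\rfloor} D$ via \Cref{prop:contraction} and \Cref{prop:diameter}, plus a \emph{prediction} piece $L_{t+1}\circ\cdots\circ L_{t+k}W_0 - \hat L_{t+1|t}\circ\cdots\circ\hat L_{t+k|t}W_0$ handled by the hybrid telescoping below.

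The heart of the argument is this telescoping. I would introduce the interpolants $M_\ell := L_{t+1}\circ\cdots\circ L_{t+\ell-1}\circ\hat L_{t+\ell|t}\circ\cdots\circ\hat L_{t+k|t}W_0$ that swap a forecast operator for its true counterpart one position at a time, so the prediction piece equals $\sum_{\ell=1}^{k}(M_{\ell+1}-M_\ell)$. Each increment has the form $L_{t+1}\circ\cdots\circ L_{t+\ell-1}\bigl[(L_{t+\ell}-\hat L_{t+\ell|t})W'_\ell\bigr]$, where the inner bracket is of size $O(\epsilon_\ell + \delta_\ell D)$ in sup norm (hence in span up to a factor of $2$), while the outer product of $\ell-1$ true Bellman operators contracts its span by $\gamma^{\lfloor(\ell-1)/J\rfloor}$ via \Cref{prop:contraction}. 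It is essential that the true $L$ operators sit on the \emph{outside} of the composition, since \Cref{assumption:strong_connect} is only hypothesized for the ground-truth kernel and may fail for $\hat P$. Grouping $\ell$ into blocks of length $J$ then assembles the coefficient $\gamma^i$ for $\ell\in[iJ+1,(i+1)J]$ and reproduces the theorem's double sum, with the $\gamma^{\lfloor k/J\rfloor}$ tail arising from the final partial block of size $k\%J$. Summing the per-step bound over $t=0,\dots,T$ and collecting terms yields the claimed inequality.

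The main obstacle is bookkeeping around bounding $\s(W'_\ell)\le D$ uniformly in $\ell$: because $W'_\ell$ is generated by the \emph{forecast} MDP, \Cref{prop:diameter} does not apply directly. I would resolve this either by adapting the proof of \Cref{prop:diameter} to control the span of any nested forecast Bellman by $D$ up to a perturbation that is already absorbed by the $\sum_\ell \delta_\ell D$ terms, or by setting up a self-referential span inequality of Lipschitz type in $\delta_\ell$ and unwinding it. The remaining care is in indexing the floor/ceiling terms so that the incomplete final block of length $k\%J$ matches the stated form, and in making sure each $J$-stage contraction is applied strictly through $L$'s and never through $\hat L$'s.
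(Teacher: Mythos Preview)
Your proposal is correct and shares the paper's overall scaffolding: the same performance-difference telescoping into per-step $Q$-gaps, the same split of each gap into a truncation piece (handled by \Cref{prop:contraction} and \Cref{prop:diameter}) and a prediction piece, and the same accounting that sums to the stated bound. Where you diverge is in how the prediction piece $\s\bigl(L_{t+1}\cdots L_{t+k}W_0 - \hat L_{t+1|t}\cdots\hat L_{t+k|t}W_0\bigr)$ is controlled. The paper does not use your hybrid interpolants $M_\ell$; instead it proves a one-step ``mixed'' perturbation lemma (\Cref{lemm:induction}) of the form $\s(L_{t+\ell}\tilde V - \hat L_{t+\ell|t}\hat V) \le \s(\tilde V-\hat V) + 2\epsilon_\ell + 2\delta_\ell D$, and then iterates it recursively in $\ell$, interleaving the $J$-stage contraction of \Cref{prop:contraction} every $J$ applications to produce the geometric factor $\gamma^i$ (this is Proposition~\ref{prop:bound_psi}). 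In other words, the paper carries the difference $\tilde\psi_t^\ell - \hat\psi_t^\ell$ through both nested Bellman stacks simultaneously, whereas you peel off one $\hat L$ at a time and place true $L$'s on the outside so that \Cref{prop:contraction} applies cleanly to each increment.

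Both routes yield the same bound and both rely on the same delicate point you flag: one needs $\s(\hat\psi_t^\ell)\le D$ (your $\s(W'_\ell)\le D$), which concerns value iterates of the \emph{forecast} MDP and is not covered directly by \Cref{prop:diameter}. The paper invokes this with the brief phrase ``by the monotonicity of Bellman operator,'' so you are right to call it the main obstacle; your suggested fixes (adapting the diameter argument or setting up a self-referential inequality) are along the right lines. Your interpolation argument is arguably more transparent, since each increment isolates a single operator swap and the contraction is applied only through true $L$'s; the paper's recursion is more compact but requires keeping straight how contraction and error accumulation interleave across $J$ mixed steps, which it presents somewhat tersely.
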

In the error-free setting (the prediction does not have error), \Cref{thm:main} simplifies to the following corollary.
\begin{corollary}
\label{coro:main}
    If the system dynamics forecast is exact for the future $k$ steps, then 
    \begin{equation*}
        \mathcal{R}(\mathrm{MPDP}) \leq T \gamma^{\lfloor k/J \rfloor} D. 
    \end{equation*}
\end{corollary}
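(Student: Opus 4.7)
The plan is to reduce the regret to a sum of per-step greedy errors via a standard performance-difference identity, and then to bound each term by the span semi-norm of the gap between MPDP's truncated tail value and the true remaining optimal value. In the error-free setting, let $\tilde W_t := L_{t+1}\circ L_{t+2}\circ\cdots\circ L_{t+k} W_0$ denote the tail value implicitly used by MPDP at step $t$, so that $a_t = \arg\max_a\{r_t(s_t,a) + \langle P_t(\cdot|s_t,a),\tilde W_t\rangle\}$, while the optimal action $a_t^\ast$ is the argmax of the same expression with $\tilde W_t$ replaced by $V_{t+1}^\ast$. A standard telescope gives
\begin{equation*}
V_0^\ast(s_0) - V_0^{\mathrm{MPDP}}(s_0) = \sum_{t=0}^{T}\E^{\mathrm{MPDP}}\!\left[Q_t^\ast(s_t,a_t^\ast) - Q_t^\ast(s_t,a_t)\right],
\end{equation*}
so it suffices to bound the per-step loss uniformly in $t$.

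For the per-step bound, set $\Delta_t := \tilde W_t - V_{t+1}^\ast$. The identity $\tilde Q_t(s,a) - Q_t^\ast(s,a) = \langle P_t(\cdot|s,a),\Delta_t\rangle$ combined with $\tilde Q_t(s_t,a_t)\ge \tilde Q_t(s_t,a_t^\ast)$ yields
\begin{equation*}
Q_t^\ast(s_t,a_t^\ast) - Q_t^\ast(s_t,a_t) \le \bigl\langle P_t(\cdot|s_t,a_t) - P_t(\cdot|s_t,a_t^\ast),\, \Delta_t\bigr\rangle.
\end{equation*}
Because the signed measure $P_t(\cdot|s_t,a_t) - P_t(\cdot|s_t,a_t^\ast)$ has zero total mass, the right-hand side is invariant under shifts of $\Delta_t$ by a constant vector, and a short computation bounds it by $\max_{s'}\Delta_t(s') - \min_{s'}\Delta_t(s') = \s(\Delta_t)$.

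It remains to bound $\s(\Delta_t)$ uniformly. When $t+k\le T$, the composition identity $V_{t+1}^\ast = L_{t+1:t+k}V_{t+k+1}^\ast$ lets me write $\Delta_t = L_{t+1:t+k}W_0 - L_{t+1:t+k}V_{t+k+1}^\ast$; applying the $J$-stage span contraction from Proposition~\ref{prop:contraction} a total of $\lfloor k/J\rfloor$ times, and absorbing the remaining at most $J-1$ Bellman operators using their span non-expansiveness, yields $\s(\Delta_t) \le \gamma^{\lfloor k/J\rfloor}\s(V_{t+k+1}^\ast) \le \gamma^{\lfloor k/J\rfloor} D$, where the last step is Proposition~\ref{prop:diameter}. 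When $t+k>T$ the planner's window covers the entire remaining horizon so $\Delta_t = 0$ and the bound holds trivially. Substituting into the telescope produces the regret bound $T\gamma^{\lfloor k/J\rfloor} D$.

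The main subtlety -- and the step I expect to need the most care -- is the per-step argument: the argmax loss must be charged against the \emph{span} of $\Delta_t$ rather than its $\ell_\infty$ norm. This matters because $\|V_{t+k+1}^\ast\|_\infty$ can scale linearly in $T$ and would make an $\ell_\infty$-based bound vacuous, whereas $\s(V_{t+k+1}^\ast)\le D$ is horizon-independent. The shift-invariance exploited here is precisely why the span semi-norm (and not the sup norm) is the natural metric for Bellman contraction, so this observation is what couples the geometric decay of the tail error to the regret guarantee.
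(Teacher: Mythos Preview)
Your proposal is correct and follows essentially the same route as the paper: the same performance-difference telescope, the same reduction of the per-step loss to $\s(\tilde W_t - V_{t+1}^*)$, the same use of the $J$-stage span contraction (Proposition~\ref{prop:contraction}) to get the factor $\gamma^{\lfloor k/J\rfloor}$, and the same appeal to Proposition~\ref{prop:diameter} to replace $\s(V_{t+k+1}^*)$ by $D$. The only cosmetic difference is that the paper packages the per-step bound as a span-over-actions statement $\s(Q_t^*(s,\cdot)-\tilde\Psi_t(s,\cdot))\le \s(V_{t+1}^*-\tilde\psi_{t+1}^0)$ (Lemma~\ref{lemma:Q_psi_bound}) and then unwinds it in Corollary~\ref{coro:Q_gap}, whereas you go directly through the zero-mass signed measure $P_t(\cdot|s_t,a_t)-P_t(\cdot|s_t,a_t^*)$; these are the same computation.
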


We observe that, in the error-free case (\Cref{coro:main}), the regret depends linearly on the time horizon $T$ and diameter $D$ and decays exponentially with the prediction horizon $k$ when \Cref{assumption:strong_connect} is satisfied. In particular, \Cref{alg:predictive_control2} with a $\log$-prediction horizon $k = O(\log T)$ will obtain a regret sublinear in $T$. This means that predictions, even if a short horizon, are powerful in the sense that it leads to sublinear regret without any assumptions on the variation budget.

In the setting of inaccurate prediction (\Cref{thm:main}), we observe that regret grows linearly with the prediction error $\epsilon_\ell,\delta_\ell$. It is important to note that the sensitivity to $\epsilon_\ell,\delta_\ell$ decays exponentially in $\ell$, meaning that the regret is more sensitive to the prediction error of the near horizon than the long horizon. Therefore, even if the prediction error increases as $\ell$ increases, as long as the increase is subexponential, using predictions in the far future with potentially large errors still has a positive impact on the overall performance. 

Lastly, our results also indicate a tradeoff between the error induced by the inaccurate predictions and the additional information it provides. For predictions further into the future, while it may contain valuable information, the potentially large error can also lead to a worse regret. 
We note that the learner can solve an optimization problem to determine the best $k$ to maximize on the exponential decay property and avoid the large error caused by forecasting too far into the future.

We briefly outline the steps of the proof of \Cref{thm:main} in \Cref{sec:proof_outline}. The full proof is deferred to \Cref{sec:proof_main}.

\section{Proof Outline}
\label{sec:proof_outline}
We split the proof outline into three separate steps. In the first step, we show that the estimated value function converges to the ground truth value function in the span semi-norm with increasing prediction horizon. In the second step, we bound the error of the estimated $Q$ function using the step-wise error bound of the Bellman operator. With the error of $Q$ function bounded, if the learner makes a mistake using the estimated $Q$ function, the maximal loss caused by that mistake can be bounded. Therefore, we can bound the maximal regret in the last step. 

\textbf{Step 1: One step error bound.} As \Cref{alg:predictive_control2} takes a greedy approach to optimize the reward within the prediction horizon of length $k$, we first need to approximate the optimal value function $V_t^*$ within the $k$ steps. 

Let $\hat{P}_{t+\ell|t}$ denote the predicted transition probability for $(t+\ell)$ -th step at time step $t$, and let $\hat{r}_t$ denote the predicted reward function. Let $\hat{\psi}_{t}^k$ denote the vector of the maximal expected reward at time $t$ for the next $k$ steps,
\small
\begin{equation}
\label{eqn:psi_hat}
    \hat{\psi}_t^{\ell}(s) = 
    \begin{cases}
        0, & t+\ell > T \text{ or } \ell > k, 
        \\
        \begin{split}
            &\max_a \Big\{\hat{r}_{t+\ell|t}(s,a) 
            + \E_{s' \sim \hat{P}_{t+\ell|t}(\cdot|s,a)}\hat{\psi}_t^{\ell+1}(s')\Big\} 
        \end{split},
        & t+\ell \leq T, \ell \leq k.
    \end{cases} 
\end{equation}
\normalsize

Similarly, let $\tilde{\psi}_t^k$ denote the vector of the maximal expected reward with completely accurate forecasts. 

\small
\begin{equation}
\label{eqn:psi_tilde}
    \Tilde{\psi}_t^{\ell}(s) = 
    \begin{cases}
        0, & t+\ell > T \text{ or } \ell > k,
        \\
        \begin{split}
            &\max_a \Big\{r_{t+\ell}(s,a) 
            + \E_{s' \sim P_{t+\ell}(\cdot|s,a)}\Tilde{\psi}_t^{\ell+1}(s')\Big\}
        \end{split}, & t+\ell \leq T, \ell \leq k.
    \end{cases} 
\end{equation}
\normalsize

Since the forecast is different from the true system dynamics, we need to bound the difference between $\hat{\psi}_t^\ell$ and $\tilde{\psi}_t^\ell$ step-wise. First, we make a simple observation. 
\begin{lemma}
\label{lemm:nesting}
    For a zero constant vector $W_0$, $\Tilde{\psi}_t^0 = L_{t} \circ \dots \circ L_{t'} W_0$, where $t' = \min\{T,t+k\}$. Similarly, $\hat{\psi}_t^0 = \hat{L}_t \circ \dots \circ \hat{L}_{t'} W_0$. 
\end{lemma}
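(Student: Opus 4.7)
The plan is to prove both identities by backward induction on $\ell$, unrolling the recursive definitions in \eqref{eqn:psi_tilde} and \eqref{eqn:psi_hat} into nested compositions of Bellman operators. The key observation is that the recursion
$$\tilde{\psi}_t^\ell(s) = \max_a \left\{r_{t+\ell}(s,a) + \sum_{s'} P_{t+\ell}(s'|s,a)\tilde{\psi}_t^{\ell+1}(s')\right\}$$
is exactly the statement that $\tilde{\psi}_t^\ell = L_{t+\ell}\tilde{\psi}_t^{\ell+1}$ whenever the non-boundary branch of \eqref{eqn:psi_tilde} applies, i.e., whenever $t+\ell \le T$ and $\ell \le k$. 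The analogous statement $\hat{\psi}_t^\ell = \hat{L}_{t+\ell|t}\hat{\psi}_t^{\ell+1}$ holds for the predicted version.

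The first step is to establish the base case. Set $t' = \min\{T, t+k\}$ and let $\ell^* = t' - t + 1$. If $t' = t+k$, then $\ell^* = k+1 > k$ triggers the boundary condition $\hat{\psi}_t^{\ell^*} = 0 = W_0$, and similarly $t + \ell^* = t + k + 1 > t'$; if $t' = T$, then $t + \ell^* = T + 1 > T$ triggers the boundary condition directly. Either way, $\tilde{\psi}_t^{\ell^*} = W_0$ and $\hat{\psi}_t^{\ell^*} = W_0$.

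The second step is the backward induction. For $\ell \in \{0, 1, \dots, t'-t\}$, the non-boundary branch applies since $t + \ell \le t' \le T$ and $\ell \le t' - t \le k$. Applying the recursion starting from $\ell = t' - t$ gives $\tilde{\psi}_t^{t'-t} = L_{t'} \tilde{\psi}_t^{\ell^*} = L_{t'} W_0$, and iterating downward yields
\begin{equation*}
    \tilde{\psi}_t^0 = L_t \tilde{\psi}_t^1 = L_t \circ L_{t+1} \tilde{\psi}_t^2 = \cdots = L_t \circ L_{t+1} \circ \cdots \circ L_{t'} W_0.
\end{equation*}
Replacing $L$ with $\hat{L}$ throughout gives the second identity $\hat{\psi}_t^0 = \hat{L}_{t|t} \circ \cdots \circ \hat{L}_{t'|t} W_0$.

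There is no substantive obstacle here; the only care needed is the bookkeeping for the two boundary regimes ($t+\ell > T$ versus $\ell > k$), which is handled uniformly once we set $t' = \min\{T, t+k\}$ and verify that $\ell^* = t'-t+1$ activates the boundary case in either regime.
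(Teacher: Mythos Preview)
Your proposal is correct and follows essentially the same approach as the paper: both arguments are just an induction that unrolls the recursive definitions \eqref{eqn:psi_tilde} and \eqref{eqn:psi_hat} into a composition of Bellman operators. The paper's proof is a two-line sketch inducting on $k$ with base case $k=0$, whereas you induct backward on $\ell$ and are more explicit about which boundary branch ($t+\ell>T$ versus $\ell>k$) is triggered at $\ell^*=t'-t+1$; the substance is identical.
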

\begin{proof}
    We proceed by induction. If $k = 0$, the equality is trivial. The induction step directly follows from \eqref{defn:bellman} and \eqref{eqn:psi_tilde}.
    
\end{proof}

The above lemma implies that, when $t+k \geq T$, we have $\Tilde{\psi}_t^0(s) = V_t^*(s)$ for all $s$. Furthermore, we point out that $V_t^* = L_t \circ \dots \circ L_T W_0 = L_t \circ \dots L_{t'} V_{t'+1}^*$. Correspondingly, we need to bound the error generated by each layer of the Bellman operators. 

\begin{lemma}
    \label{lemm:induction}
    For any $\tilde{V}, \hat{V} \in \mathbb{R}^{|\mathcal{S}|}$ such that $\s(\tilde{V})<D,\s(\hat{V})<D$ and $\s(\tilde{V} - \hat{V}) \leq b$,  we have 
    \begin{equation*}
        \s\left(L_{t+\ell} \tilde{V} - \hat{L}_{t+\ell|t} \hat{V}\right) \leq b + 2\epsilon_\ell + 2\delta_{\ell} D.
    \end{equation*}
\end{lemma}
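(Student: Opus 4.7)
The plan is to apply the triangle inequality for the span semi-norm
\begin{equation*}
\s(L_{t+\ell}\tilde V - \hat L_{t+\ell|t}\hat V) \le \s(L_{t+\ell}\tilde V - L_{t+\ell}\hat V) + \s(L_{t+\ell}\hat V - \hat L_{t+\ell|t}\hat V)
\end{equation*}
and bound the two pieces separately. The first piece is the ``same operator, different inputs'' part and should absorb the $b$ term; the second piece is the ``same input, different operator'' part and should produce the $2\epsilon_\ell + 2\delta_\ell D$ term.

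For the first piece, I would establish the non-expansion property $\s(L_{t+\ell} u - L_{t+\ell} v) \le \s(u-v)$ for the true Bellman operator. The standard argument is: pick any state $s$, let $a^\star$ attain the max in $L_{t+\ell}\tilde V(s)$, and compare with $L_{t+\ell}\hat V(s) \ge r + \sum P(\cdot|s,a^\star)\hat V$, yielding $L_{t+\ell}\tilde V(s) - L_{t+\ell}\hat V(s) \le \max_{s'}(\tilde V(s') - \hat V(s'))$. The reverse direction gives $\min_{s'}(\tilde V(s') - \hat V(s'))$ as a lower bound. Taking the difference between any two states on the LHS gives a span bounded by $\s(\tilde V - \hat V) \le b$.

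For the second piece, I would bound it pointwise in $\|\cdot\|_\infty$ and then use $\s(\cdot) \le 2\|\cdot\|_\infty$. Using $|\max_a f(a) - \max_a g(a)| \le \max_a |f(a) - g(a)|$,
\begin{equation*}
|L_{t+\ell}\hat V(s) - \hat L_{t+\ell|t}\hat V(s)| \le \max_a |r_{t+\ell}(s,a) - \hat r_{t+\ell|t}(s,a)| + \max_a \Bigl|\sum_{s'}(P_{t+\ell} - \hat P_{t+\ell|t})(s'|s,a)\,\hat V(s')\Bigr|.
\end{equation*}
The reward gap gives $\epsilon_\ell$ directly. For the transition gap I would use the centering trick: since both $P_{t+\ell}$ and $\hat P_{t+\ell|t}$ are probability measures, $\sum_{s'}(P_{t+\ell} - \hat P_{t+\ell|t})(s'|s,a) = 0$, so subtracting any constant $c$ from $\hat V$ leaves the inner product unchanged. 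Choosing $c = (\max \hat V + \min \hat V)/2$ gives $\|\hat V - c\mathbf{1}\|_\infty = \s(\hat V)/2 \le D/2$, and then
\begin{equation*}
\Bigl|\sum_{s'}(P_{t+\ell}-\hat P_{t+\ell|t})(s'|s,a)\hat V(s')\Bigr| \le \|\hat V - c\mathbf{1}\|_\infty \cdot \sum_{s'}|P_{t+\ell} - \hat P_{t+\ell|t}| \le \tfrac{D}{2}\cdot 2\delta_\ell = \delta_\ell D.
\end{equation*}
So the pointwise $\ell_\infty$ bound is $\epsilon_\ell + \delta_\ell D$, and passing to span gives the factor of 2.

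The main obstacle, and the reason to lay out the centering step carefully, is that a naive bound on the transition term would use $\|\hat V\|_\infty$ rather than $\s(\hat V)$, which could be much larger than $D$; without exploiting that $P - \hat P$ sums to zero and using the hypothesis $\s(\hat V) \le D$, one cannot recover the advertised $\delta_\ell D$ scaling. Everything else is bookkeeping via the triangle inequality for $\s(\cdot)$ and the elementary identity $\s(v) \le 2\|v\|_\infty$.
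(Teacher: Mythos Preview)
Your proposal is correct and reaches the stated bound. The route differs somewhat from the paper's, so a brief comparison is warranted.

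You split via the triangle inequality into a ``same operator, different input'' piece and a ``same input, different operator'' piece, then invoke span non-expansion of $L_{t+\ell}$ for the first and a pointwise $\ell_\infty$ bound (doubled to pass to span) for the second. The paper instead works directly on $\hat L_{t+\ell|t}\hat V(s) - L_{t+\ell}\tilde V(s)$: for each state $s$ it plugs the maximizing action of one side into the other to get two-sided sandwich bounds of the form
\[
-\E_{s'\sim P_{t+\ell}(\cdot|s,\tilde a)}[\tilde V-\hat V](s') - \epsilon_\ell - \delta_\ell D \;\le\; \hat L\hat V(s) - L\tilde V(s) \;\le\; -\E_{s'\sim P_{t+\ell}(\cdot|s,\hat a)}[\tilde V-\hat V](s') + \epsilon_\ell + \delta_\ell D,
\]
then evaluates at the argmax and argmin states and subtracts. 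The resulting difference of expectations is bounded by $\s(\tilde V-\hat V)\le b$ (since it is the integral of $\tilde V-\hat V$ against a signed measure of total variation at most one), and the error terms double to $2\epsilon_\ell+2\delta_\ell D$.

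Both arguments rely on the same two ingredients: the centering trick you highlight (exploiting $\sum_{s'}(P-\hat P)(s')=0$ together with $\s(\hat V)\le D$) to get the $\delta_\ell D$ scaling, and the fact that convex combinations of $\tilde V-\hat V$ lie between its min and max to get the $b$ term. Your decomposition is a bit more modular and makes the two error sources explicit; the paper's direct sandwich avoids introducing the intermediate quantity $L_{t+\ell}\hat V$ but ends up performing essentially the same calculations inside a single chain of inequalities.
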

The proof of the above lemma is left to Appendix~\ref{appendix:proof_induction}.

\textbf{Step 2: Bounding the error of $Q$ function.} We define the optimal $Q$ function as follows
\begin{equation}
    Q_t^*(s,a) = r_t(s,a) + \E[V_{t+1}^*(s_{t+1})|s_t = s, a_t = a],
\end{equation}
Since we use $\hat{\psi}$ and $\tilde{\psi}$ to estimate the value function $V_t^*$ of each time step, we can construct estimates of $Q_t^*$ as follows:
    \begin{equation}
    \label{eqn:Psi_tilde}
        \Tilde{\Psi}_t(s,a) := r_t(s,a) + \E_{s'\sim \mathbb{P}_t(\cdot|s,a)}[\Tilde{\psi}_{t+1}^0(s')],
    \end{equation}
    \begin{equation}
    \label{eqn:Psi_hat}
        \hat{\Psi}_t(s,a) := \hat{r}_t(s,a) + \E_{s'\sim \hat{\mathbb{P}}_{t|t}(\cdot|s,a)}[\hat{\psi}_{t+1}^0(s')].
    \end{equation}
In order to bound the error between $\hat{\Psi}_t$ and $Q_t^*$, we need to bound two pairs of the difference: the difference between $\tilde{\Psi}_t$ and $Q_t^*$, and the difference between $\tilde{\Psi}_t$ and $\hat{\Psi}_t$. The details of those steps are deffered to \Cref{lemma:Q_psi_bound} and \Cref{lemma:hat_tilde_Psi} in \Cref{appendix:Q_function}, respectively. With an error bound of our approximated $Q$ function, we can upper bound the loss of reward by each mistake made by the algorithm.

\begin{corollary}
\label{coro:Q_gap}
    Let $a$ be the action picked by Algorithm~\ref{alg:predictive_control2} with prediction error as defined in \Cref{assumption:error_bound} at the $t$-th time step at state $s$, and let $a^*$ be the optimal action at time $t$, then, 
    \begin{equation*}
        \begin{split}
            Q_t^*(s,a^*) - Q_t^*(s,a) \leq& \gamma^{\lfloor k/J \rfloor}\s\left(V_{t+k+1}^*\right) + 2\epsilon_0 + 2\delta_0 D + 2\sum_{i=0}^{\lfloor k/J \rfloor-1}\gamma^i \left(\sum_{j=1}^{J} \epsilon_{iJ+j} + \sum_{j=1}^{J} \delta_{iJ+j} D\right)
            \\
            &+ 2\gamma^{\lfloor k/J \rfloor} \left(\sum_{j=1}^{k\% J} \epsilon_{\lfloor k/J \rfloor \cdot J + j} + \sum_{j=1}^{k\% J} \delta_{\lfloor k/J \rfloor \cdot J + j} D\right) .
        \end{split}
    \end{equation*}
\end{corollary}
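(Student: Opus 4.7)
The plan is to deduce \Cref{coro:Q_gap} by reducing the action sub-optimality $Q_t^*(s,a^*)-Q_t^*(s,a)$ to the action-wise span of the approximation error $Q_t^*(s,\cdot)-\hat{\Psi}_t(s,\cdot)$, and then splitting that span through $\Tilde{\Psi}_t$ so that the two halves are controlled by the already-announced \Cref{lemma:Q_psi_bound} and \Cref{lemma:hat_tilde_Psi}. The key reduction is as follows. By \eqref{eqn:a_policy}, \Cref{lemm:nesting}, and the definition \eqref{eqn:Psi_hat}, the action chosen by \Cref{alg:predictive_control2} satisfies $a\in\arg\max_{a'\in\mathcal{A}}\hat{\Psi}_t(s,a')$, while $a^*\in\arg\max_{a'\in\mathcal{A}} Q_t^*(s,a')$. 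Writing $f(a'):=Q_t^*(s,a')-\hat{\Psi}_t(s,a')$,
\begin{equation*}
Q_t^*(s,a^*)-Q_t^*(s,a)
= \underbrace{\hat{\Psi}_t(s,a^*)-\hat{\Psi}_t(s,a)}_{\le 0}+f(a^*)-f(a)
\le \s\bigl(Q_t^*(s,\cdot)-\hat{\Psi}_t(s,\cdot)\bigr).
\end{equation*}
This observation is crucial: since the argmax is invariant to constant shifts in $\hat{\Psi}_t(s,\cdot)$, only the span (and not the maximum absolute value) of the action-indexed error has to be controlled, which is precisely why the paper formulates its intermediate bounds in the $\s(\cdot)$ semi-norm.

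Next I would insert $\Tilde{\Psi}_t$ as an intermediate and invoke subadditivity of $\s(\cdot)$ (\Cref{prop:span}),
\begin{equation*}
\s\bigl(Q_t^*(s,\cdot)-\hat{\Psi}_t(s,\cdot)\bigr)
\le \s\bigl(Q_t^*(s,\cdot)-\Tilde{\Psi}_t(s,\cdot)\bigr) + \s\bigl(\Tilde{\Psi}_t(s,\cdot)-\hat{\Psi}_t(s,\cdot)\bigr),
\end{equation*}
and handle the two summands separately. For the first, unfolding \eqref{eqn:Psi_tilde} reveals that the action-indexed difference is a probability mixture (over next states) of $V_{t+1}^*-\Tilde{\psi}_{t+1}^0$, and a probability mixture cannot increase the span (\Cref{prop:span}). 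Using \Cref{lemm:nesting} to write both $V_{t+1}^*$ and $\Tilde{\psi}_{t+1}^0$ as the same nested chain of true-dynamics Bellman operators applied to the horizon value function $V_{t+k+1}^*$ and to $W_0$ respectively, and iterating \Cref{prop:contraction} across the $\lfloor k/J\rfloor$ complete $J$-blocks, produces $\gamma^{\lfloor k/J\rfloor}\s(V_{t+k+1}^*)$---this is the content of \Cref{lemma:Q_psi_bound}. For the second summand, iterating \Cref{lemm:induction} along the $k$-step planning horizon and again grouping into $\lfloor k/J\rfloor$ full $J$-blocks (each contracted by one power of $\gamma$ via \Cref{prop:contraction}) plus a residual tail of length $k\%J$ yields the root contribution $2\epsilon_0+2\delta_0 D$, the geometric sum $4\sum_{i=0}^{\lfloor k/J\rfloor-1}\gamma^i\bigl(\sum_{j=1}^{J}\epsilon_{iJ+j}+\delta_{iJ+j}D\bigr)$, and the tail $4\gamma^{\lfloor k/J\rfloor}\bigl(\sum_{j=1}^{k\%J}\epsilon_{\lfloor k/J\rfloor J+j}+\delta_{\lfloor k/J\rfloor J+j}D\bigr)$---this is the content of \Cref{lemma:hat_tilde_Psi}. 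Summing the two bounds reproduces the stated inequality verbatim.

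The main obstacle is the index bookkeeping when interleaving the additive one-step accumulation of \Cref{lemm:induction} with the multiplicative $J$-stage contraction of \Cref{prop:contraction}. Per-step errors enter additively with weight $2\epsilon_\ell+2\delta_\ell D$, but a factor of $\gamma$ only appears after a full block of $J$ operators---so the $k$ planning steps have to be partitioned as $\lfloor k/J\rfloor$ complete blocks plus a residual of $k\%J$ steps whose accumulated error is multiplied by $\gamma^{\lfloor k/J\rfloor}$ (not $\gamma^{\lceil k/J\rceil}$), and some care is needed to line up which $\epsilon_\ell,\delta_\ell$ belongs to which block. The doubling from the $2$ in \Cref{lemm:induction} to the $4$ in the final statement is traceable to combining the outer action-span triangle inequality above with the inner add-and-subtract triangle inequality used inside \Cref{lemma:hat_tilde_Psi} to separate each $\hat{L}$-versus-$L$ step into a true-dynamics contraction piece and a pure prediction-error piece.
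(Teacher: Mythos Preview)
Your proposal is correct and follows essentially the same route as the paper. The paper starts from the span bound of \Cref{lemma:Q_psi_bound} evaluated at the specific pair $(a^*,a)$, adds and subtracts $\hat{\Psi}_t(s,a^*)$ and $\hat{\Psi}_t(s,a)$, drops $\hat{\Psi}_t(s,a^*)-\hat{\Psi}_t(s,a)\le 0$, and then applies \Cref{lemma:hat_tilde_Psi} twice (once at $a^*$, once at $a$); your version packages the same algebra as $Q_t^*(s,a^*)-Q_t^*(s,a)\le \s(Q_t^*(s,\cdot)-\hat{\Psi}_t(s,\cdot))$ followed by the span triangle inequality through $\tilde{\Psi}_t$. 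One small clarification: the factor of $2$ turning the constants in \Cref{lemma:hat_tilde_Psi} into the $2\epsilon_0+2\delta_0 D$ and the $4$'s in the corollary is exactly the ``apply the pointwise bound at both $a^*$ and $a$'' step (equivalently $\s(v)\le 2\|v\|_\infty$), not anything internal to \Cref{lemma:hat_tilde_Psi} itself.
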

The proof of \Cref{coro:Q_gap} is deferred to \Cref{appendix:Q_function}. 

\textbf{Step 3: Bounding regret.}
Since the error of $Q$ function is bounded at every step, we can upper bound the regret $R$ by bounding the telescoping sum of $\E[\sum_{t} \left( Q_t^*(s_t,a_t^*) - Q_t^*(s_t,a_t)\right)]$. By bounding the error at each step, we obtain \Cref{thm:main}.

\section{Simulation}
\label{sec:simulation}
\subsection{Queueing system}

In the first simulation, we simulate a queueing system based on the setup provided in \Cref{example:server}. Specifically, we consider a representative example of 3 servers
whose service rates $\{\mu_{i}\}_{i = 1,2,3}$ are 100, 10, 1, respectively, with time horizon $T = 100$ and varying load $\lambda_t$ fluctuating from 10 to 100.

In the first part of this simulation, we compare regrets of different lengths of the prediction horizon $k$ and the Fast Available Server (FAS) \citep{Lin84}, ratio-of-service-rate-thresholds (RSRT) algorithm \citep{Ozkan14}, and Non-Stationary Natural Actor-Critic (NS-NAC) \citep{jali2025}. FAS is a popular algorithm frequently used in practice, which sends any available job immediately to the fastest available server. RSRT is a threshold policy where a
job is routed to the fastest among the available servers only if the queue length exceeds a predetermined threshold~\citep{Ozkan14}. It has been proven to be the optimal policy in the two-server setting ~\citep{Ozkan14}. Notably, typical time-varying RL algorithm, such as NS-NAC, would not work in this example without a pretrained initial policy, as there are infinitely many states in the queuing problem, and the policy would keep exploring new states with longer queue lengths. For a proper comparison, we set the initial policy similar to RSRT with a small exploration probability for every action. We then compute regret where the optimal policy has full knowledge of the transitional probability. For each $k \in \{1,\dots,15\}$, we run 20 trials and record the average regret for each $k$ value. The agent has access to the predicted arrival rate of jobs with some Gaussian additive prediction error $\hat{\lambda}_t := \lambda_t + \mathcal{N}(0,\sigma)$ with $\sigma \in \{0,1,2\}$. The optimal policy we compute our regret from is the policy that is computed knowing all future transitional probability and reward functions. The arrival of jobs fluctuates periodically with a $\sin$ function with magnitude 110 to simulate periodical change in demand. The queue length of FAS is consistently the longest throughout the time horizon, and RSRT has a queue length similar to that of MPDP with $k=8$. MPDP with $k=12$ has the shortest queue length throughout most of the time steps. As shown in the rest of \Cref{fig:simulation_a_tg}, the proposed algorithm outperforms both benchmarks with $k\geq 8$ under noise-free setting and also outperforms both benchmarks with $k \geq 10$ with prediction error. Specifically, we see a decay in log-scale of regret. However, as we have $J$-stage contraction, the regret does not necessarily decrease monotonically with every increase in $k$.

\begin{figure*}[t!]
    \centering
    \begin{subfigure}[t]{0.32\textwidth}
        \centering
        \includegraphics[scale=0.38]{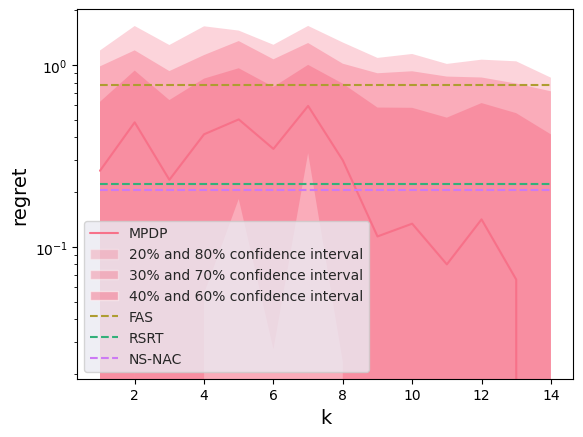}
        \caption{$\sigma = 0$}
        \label{fig:simulation_noiseless}
    \end{subfigure}
    ~
    \begin{subfigure}[t]{0.32\textwidth}
        \centering
        \includegraphics[scale=0.38]{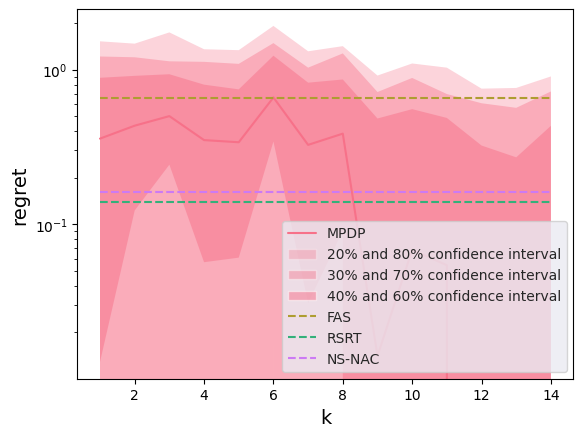}
        \caption{$\sigma = 1$}
        \label{fig:simulation_a}
    \end{subfigure}
    ~
    \begin{subfigure}[t]{0.32\textwidth}
        \centering
        \includegraphics[scale=0.38]{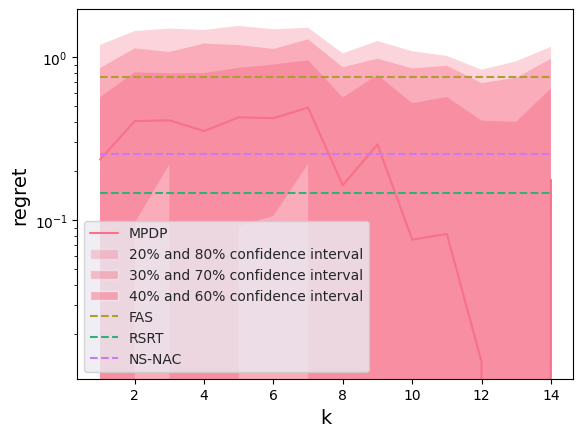}
        \caption{$\sigma=2$}
        \label{fig:simulation_a2}
    \end{subfigure}%
    \caption{\Cref{fig:simulation_noiseless}, \Cref{fig:simulation_a}, and \Cref{fig:simulation_a2} show the regret of MPDP under different additive prediction error $\mathcal{N}(0,\sigma)$. The red solid line shows the mean of the regret, and the shaded area shows the confidence interval.}
    \label{fig:simulation_a_tg}
\end{figure*}

In the second part of the simulation, we fix $k=10$ and examine the relationship between the magnitude of the prediction error and regret more closely. Although the learner can still forecast the system dynamics in the future, the predicted arrival rate of jobs $\hat{\lambda}_t := \lambda_t + \mathcal{N}(0,\sigma_{t,\ell})$. We first hold $\sigma_{t,\ell}$ to be constant of $t,\ell$ as in the first part. Therefore, the variance of the prediction error does not increase with the distance of the forecast into the future. As shown in \Cref{fig:simulation_b}, the regret initially remained minimal and increased linearly after variance reaches 8, as shown in \Cref{thm:main}. 

\begin{figure*}[t!]
    \centering
    \begin{subfigure}[t]{0.5\textwidth}
        \centering
        \includegraphics[scale=0.27]{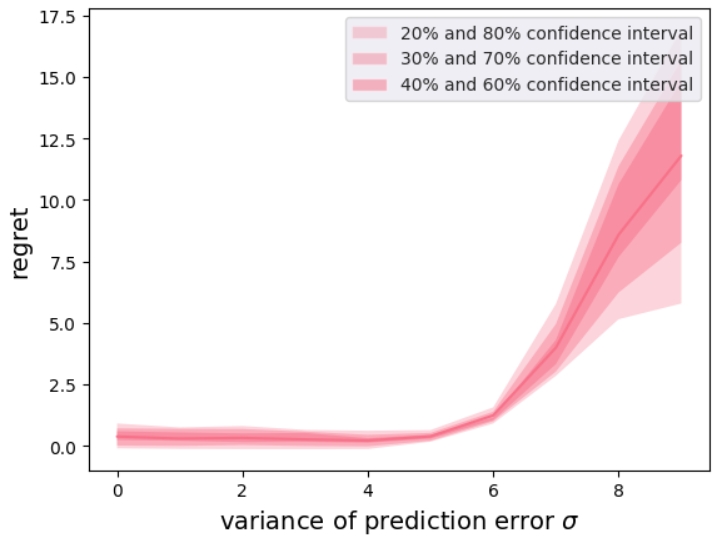}
        \caption{}
        \label{fig:simulation_b}
    \end{subfigure}%
    ~ 
    \begin{subfigure}[t]{0.5\textwidth}
        \centering
        \includegraphics[scale=0.27]{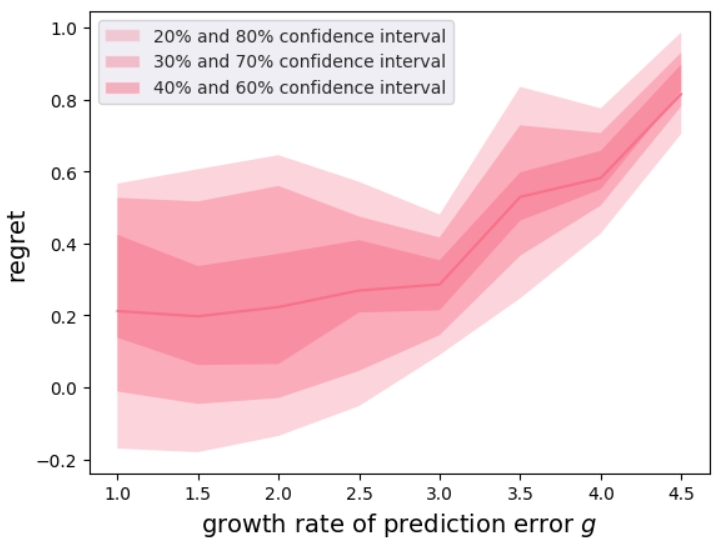}
        \caption{}
        \label{fig:simulation_c}
    \end{subfigure}
    \caption{\Cref{fig:simulation_b} shows the average regret remains almost constant with variance of prediction error below 5 and starts to grow afterward. \Cref{fig:simulation_c} shows the regret slowly increases with the growth rate of variance of the prediction error with respect to the prediction horizon.}
\end{figure*}

Most practical forecast are usually more accurate for closer future than distant future. In applications like wind power generation, it has been shown that the accuracy of forecasts decreases at a linear rate with respect to the distant in the future~\citep{Qu13}. Therefore, in the third simulation, we fix $k=10$ and the initial variance of the prediction error to be $1$ and examine the relationship between regret and growth rate of variance with respect to the prediction horizon $g$. More specifically, for each forecast, we fix the variance $\sigma_{t,0} = 1$, and $\sigma_{t,\ell} = g*\ell$. We are most interested in the case where $g>1$, as the variance tends to increase with respect to the distance to the future predicted by the forecast. As shown in \Cref{fig:simulation_c}, the regret does increase with growth rate $g$, but the increase is relatively slow. Even for the case $g = 4$, which indicates the variance increases by 4 times for every time step, thus reaching 40 times of the initial error at the end of the prediction horizon, the average regret merely increased 2 times. Indeed, by the expression in \Cref{thm:main}, the regret would never explode if the growth rate of variance is sub-exponential. 

\begin{figure*}[t]
    \centering
    \begin{subfigure}[t]{0.32\textwidth}
        \centering
        \includegraphics[scale=0.38]{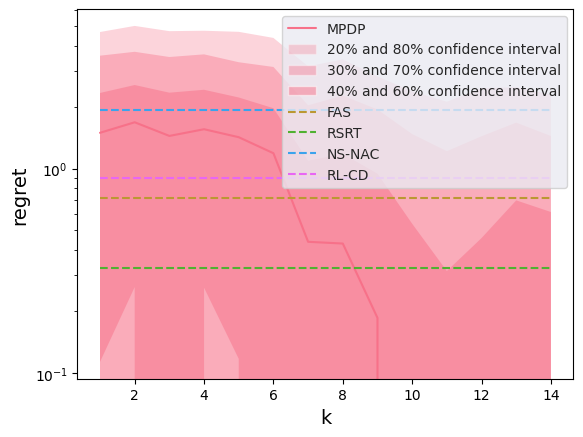}
        \caption{$\sigma = 0$}
        \label{fig:context}
    \end{subfigure}
    ~
    \begin{subfigure}[t]{0.32\textwidth}
        \centering
        \includegraphics[scale=0.38]{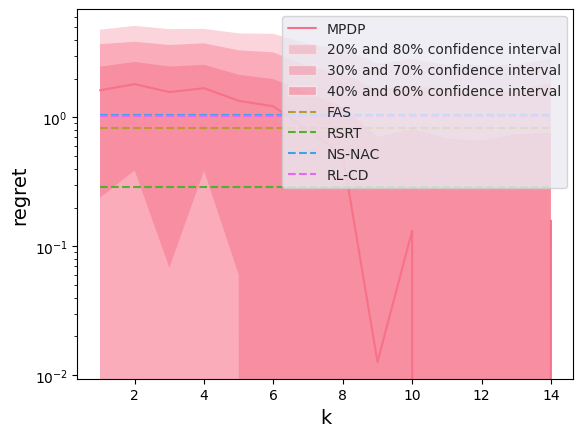}
        \caption{$\sigma = 1$}
        \label{fig:context1}
    \end{subfigure}
    ~
    \begin{subfigure}[t]{0.32\textwidth}
        \centering
        \includegraphics[scale=0.38]{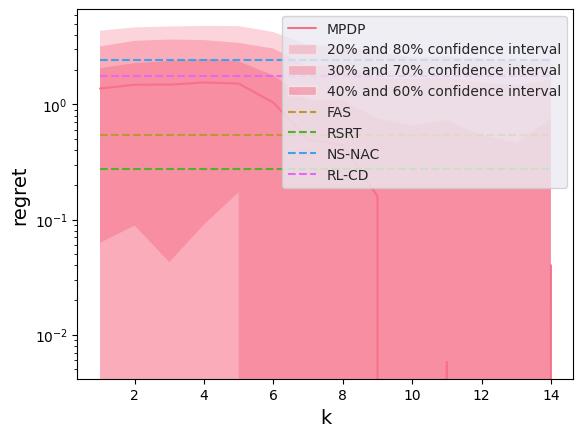}
        \caption{$\sigma=2$}
        \label{fig:context2}
    \end{subfigure}%
    \caption{\Cref{fig:context}, \Cref{fig:context1}, and \Cref{fig:context2} show the regret of MPDP under different additive prediction error $\mathcal{N}(0,\sigma)$ in a setting with a finite number of changes in MDP's. The red solid line shows the mean of the regret, and the shaded area shows the confidence interval.}
    \label{fig:simulation_context}
\end{figure*}

Lastly, we introduce a simplified version of the queueing problem in \Cref{example:server}. While keeping all other factors identical, we fix $\lambda_t \in \{30,130\}$ and switch $\lambda_t$ every 50 steps. By fixing $\lambda$ to only two potential values, we simplify the setting where a traditional RL algorithm with context detection might have an advantage. To incorporate more changes in MDP, we also extend the time horizon to $T=300$.

The result of the simulation is shown in \Cref{fig:simulation_context}. We introduce RL-CD proposed in \cite{Silva06} as a new benchmark. For a proper comparison, we also set the initial policy to be similar to RSRT, with a small exploration probability for every action. Due to the scarcity of changes, both RL-CD and NS-NAC perform worse than RSRT, which is provably optimal for queueing systems with constant $\lambda$. However, the proposed algorithm still outperforms RSRT with $k \geq 9$ in all three noise settings with $\sigma \in \{0,1,2\}$, demonstrating the effectiveness of MPDP when model prediction is available.  

\subsection{EV charging}

\begin{figure*}[t!]
    \centering
    \begin{subfigure}[t]{0.5\textwidth}
        \centering
        \includegraphics[scale=0.25]{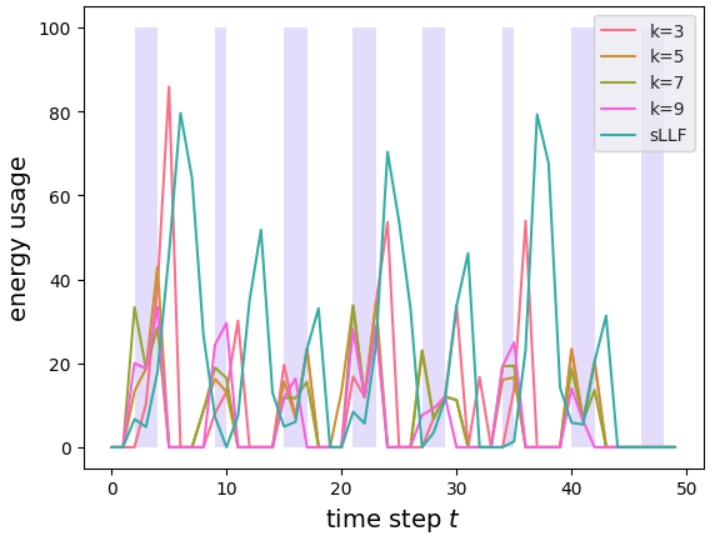}
        \caption{}
        \label{fig:simulation_d}
    \end{subfigure}%
    ~ 
    \begin{subfigure}[t]{0.5\textwidth}
        \centering
        \includegraphics[scale=0.25]{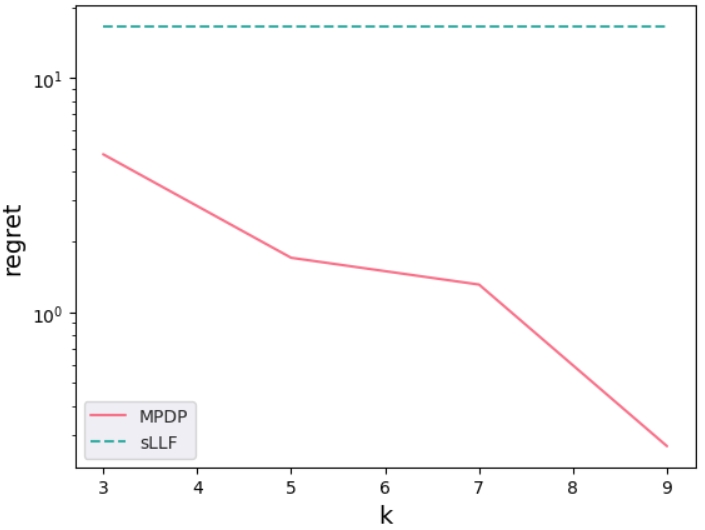}
        \caption{}
        \label{fig:simulation_e}
    \end{subfigure}
    \caption{\Cref{fig:simulation_d} shows the power usage at different time steps. The shaded area indicates the time period with energy prices below 8. We see that with $k \geq 7$, the most energy usage happens within the area with low energy prices, reducing the total energy cost of the station. \Cref{fig:simulation_e} demonstrates that the regret of EV charging decays with the prediction horizon. Compared with traditional scheduling policies, the proposed algorithm can lower the total energy cost even with a few prediction steps.}
\end{figure*}

In this section, we consider a scenario of EV charging station under the setup of \Cref{example:energy} with time horizon $T = 50$. The charging station has three charging stands, and the energy price fluctuates between 2 and 18.  

We first show the correlation between the change of energy price and energy usage. We compare the regret of our algorithm with the benchmark policy, the smoothed least-laxity-first algorithm (sLLF) proposed in \citet{Chen22}, which prioritizes charging the EV that is the closest to the departure time. However, given the fluctuating energy price, the optimal policy should charge the EVs at the time steps with the lowest energy price that can still satisfy the energy demand of the EVs before their departure times. As shown in \Cref{fig:simulation_d}, compared with sLLF, our algorithm selects better time for charging each EV. In particular, when $k$ increases, most of the peak of energy demand falls within the shaded area with low energy price. 

We then show the decay of regret with respect to the growth in the prediction horizon. Compared with traditional scheduling algorithm proposed in \citet{Chen22}, our algorithm can better handle the fluctuation in energy price. As shown in \Cref{fig:simulation_e}, even with only a few steps of prediction, the station's regret decays exponentially.

\section{Conclusion}
\label{sec:conclusion}
This paper designs a noval algorithm for non-stationary MDP utilizing exogenous prediction. We showed, under the assumption of uniform ergodicity, our algorithm achieves a regret of $O(T \gamma^{\lfloor k/J \rfloor}D)$. When $k=O(\log T)$, we obtain a regret sublinear in $T$. We also show that even when the prediction error grows subexponentially, the regret does not explode. The future directions of this work includes the application of this framework in partially observable MDPs and the extension of this framework when only part of the transitional probability and reward functions are predictable. 

\bibliography{tmlr}
\bibliographystyle{tmlr}

\appendix

\section{Contraction in span semi-norm}
\label{appendix:span}
In this section, we briefly introduce some properties of \textit{span} semi-norm, and how it helps to show the contraction property in \Cref{thm:main}. 
\begin{proposition}
\label{prop:span}
    The \textit{span} has the following properties:
\begin{enumerate}
    \item $\s(v) \geq 0, \forall v \in \mathbb{R}^d$. 
    \item $\s(u+v) \leq \s(u) + \s(v)$ for all $u,v \in \mathbb{R}^d$. 
    \item $\s(kv) = |k| \s(v)$ for all $k \in \mathbb{R}, v \in \in \mathbb{R}^d$. 
    \item $\s(v+ke) = \s(v)$ for all $k \in \mathbb{R}$, where $e = [1,\dots,1]^\top$. 
    \item $\s(v) = \s(-v)$.
    \item $\s(v) \leq 2 \norm{v}_2$. 
\end{enumerate}
\end{proposition}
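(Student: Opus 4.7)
The plan is to derive all six properties as elementary consequences of the defining identity $\s(v) = \max_i v(i) - \min_i v(i)$. Since each item reduces to a one-line computation, I would organize the proof around a couple of simple observations about how $\max$ and $\min$ interact with the vector-space operations, and then read off each claim in turn.

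\textbf{Step 1 (non-negativity and subadditivity).} Property 1 is immediate from $\max_i v(i) \geq \min_i v(i)$. For property 2, I would apply the standard bounds $\max_i(u(i)+v(i)) \leq \max_i u(i) + \max_i v(i)$ and $\min_i(u(i)+v(i)) \geq \min_i u(i) + \min_i v(i)$; subtracting the latter inequality from the former yields $\s(u+v)\leq \s(u)+\s(v)$.

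\textbf{Step 2 (scalar multiplication and sign flip).} For property 3 I would split on the sign of $k$. When $k\geq 0$, both $\max_i kv(i)$ and $\min_i kv(i)$ scale by $k$, so $\s(kv) = k\,\s(v) = |k|\,\s(v)$. When $k<0$, multiplication by $k$ swaps the roles of $\max$ and $\min$, producing an extra sign flip that again collapses to $|k|\,\s(v)$. Property 5 is then the immediate specialization to $k=-1$.

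\textbf{Step 3 (shift invariance and $\ell^2$-domination).} For property 4, I would use $\max_i(v(i)+k) = \max_i v(i) + k$ and the analogous identity for $\min$, so the additive constant cancels in the difference. Finally, for property 6 I would bound
\begin{equation*}
\s(v) \leq |\max_i v(i)| + |\min_i v(i)| \leq 2\,\norm{v}_\infty \leq 2\,\norm{v}_2,
\end{equation*}
where the last inequality is the elementary comparison between the $\ell^\infty$ and $\ell^2$ norms on $\R^d$.

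There is no genuine obstacle here; the proposition is essentially bookkeeping around the definition and the result is classical (cf. \citet{Puterman94}). The only mildly delicate point is property 3, where a careless argument can accidentally produce $k\,\s(v)$ rather than $|k|\,\s(v)$, so I would be explicit about the $k<0$ case and the swap between $\max$ and $\min$.
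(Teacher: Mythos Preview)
Your proposal is correct and matches the paper's approach: the paper simply states that the proof ``easily follows from \Cref{defn:span}'' and refers to \citet{Puterman94}, so your explicit verification of each item from the definition is exactly what the paper has in mind, only spelled out in more detail.
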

The proof of the above proposition easily follows from \Cref{defn:span}. More detailed properties of the span semi-norm can be found in \citet{Puterman94}.
In the following proposition, we show the fundamental step of $J$-stage contraction. 
\begin{proposition}
\label{prop:one-step-contraction}
    Let $v \in \mathbb{R}^n$ and policies $\pi_{t},\dots,\pi_{t+J}$, then
    \begin{equation*}
        \s(P_{t:t+J}^{\pi_t:\pi_{t+J}} v) \leq \gamma \s(v),
    \end{equation*}
    where $P_t^{\pi}$ is the transitional probability at time $t$ with action determined by $\pi$, $P_{t:t'}^{\pi_t: \pi_{t'}} = P_t^{\pi_t} \cdots P_{t'}^{\pi_{t'}}$, and  
    \begin{align*}
        \gamma &= 1 - \min_{s,u \in \mathcal{S}} \sum_{j \in \mathcal{S}} \min \{P_{t:t+J}^{\pi_t:\pi_{t+J}}(j|s), P_{t:t+J}^{\pi_t:\pi_{t+J}}(j|u)\}
        \\
        &= \frac{1}{2} \max_{s,u \in \mathcal{S}}\sum_{j \in \mathcal{S}}|P_{t:t+J}^{\pi_t:\pi_{t+J}}(j|s) - P_{t:t+J}^{\pi_t:\pi_{t+J}}(j|u)|
        = \max_{s,u \in \mathcal{S}}\sum_{j \in \mathcal{S}}[P_{t:t+J}^{\pi_t:\pi_{t+J}}(j|s) - P_{t:t+J}^{\pi_t:\pi_{t+J}}(j|u)]^+.
    \end{align*}
    Furthermore, $\gamma \in [0,1]$, and there exists $v$ such that $\s(P_{t:t+J}^{\pi_t:\pi_{t+J}} v) = \gamma \s(v)$. 
\end{proposition}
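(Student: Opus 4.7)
The plan is to reduce the claim to a standard fact about total variation distance between two probability measures, by rewriting the span seminorm as a maximum over pairs of rows of the stochastic matrix $P := P_{t:t+J}^{\pi_t:\pi_{t+J}}$. Concretely, I start from
\begin{equation*}
\s(Pv) \;=\; \max_{s,u \in \mathcal{S}} \bigl((Pv)(s) - (Pv)(u)\bigr) \;=\; \max_{s,u \in \mathcal{S}} \sum_{j \in \mathcal{S}} \bigl(P(j|s) - P(j|u)\bigr)\, v(j),
\end{equation*}
and for a fixed pair $(s,u)$ define the signed measure $\alpha_j := P(j|s) - P(j|u)$. Since both rows are probability distributions, $\sum_j \alpha_j = 0$, so $\sum_j \alpha_j^+ = \sum_j \alpha_j^- =: \beta_{s,u}$.

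Next I would split $\sum_j \alpha_j v(j) = \sum_j \alpha_j^+ v(j) - \sum_j \alpha_j^- v(j)$ and bound each piece by replacing $v(j)$ with $\max_j v(j)$ in the positive part and with $\min_j v(j)$ in the negative part, yielding $\sum_j \alpha_j v(j) \le \beta_{s,u}\,\s(v)$. Taking the maximum over $(s,u)$ gives $\s(Pv) \le \bigl(\max_{s,u} \beta_{s,u}\bigr)\s(v)$, so it remains to identify $\max_{s,u}\beta_{s,u}$ with $\gamma$ in each of its three stated forms. The identity $\alpha_j^+ = P(j|s) - \min\{P(j|s),P(j|u)\}$ gives $\beta_{s,u} = 1 - \sum_j \min\{P(j|s),P(j|u)\}$, which is the first expression for $\gamma$; combining $\alpha_j^+ + \alpha_j^- = |\alpha_j|$ with $\alpha_j^+ = \alpha_j^-$ gives $\beta_{s,u} = \tfrac{1}{2}\sum_j |P(j|s) - P(j|u)|$, the second; and the third is simply $\sum_j \alpha_j^+$ by definition.

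To check $\gamma \in [0,1]$, I observe that $\gamma \ge 0$ since it is a max of $\sum_j [\cdot]^+$ terms, and $\gamma \le 1$ because $\sum_j \min\{P(j|s),P(j|u)\} \ge 0$. For tightness, I would pick $(s^*,u^*)$ attaining the max and define $v$ to be the indicator of the set $B := \{j : P(j|s^*) \ge P(j|u^*)\}$. Then $v(j) \in \{0,1\}$ so $\s(v) = 1$ (assuming $B$ is a proper nonempty subset; otherwise $\gamma = 0$ and any constant $v$ trivially works), and the inequality in Step~1 is saturated because $v = \max_j v$ exactly on the support of $\alpha^+$ and $v = \min_j v$ on the support of $\alpha^-$, giving $(Pv)(s^*) - (Pv)(u^*) = \beta_{s^*,u^*} = \gamma = \gamma\,\s(v)$.

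The argument is essentially algebraic manipulation once the connection to total variation is made; I do not expect a genuine obstacle. The one place to be slightly careful is the equivalence of the three forms of $\gamma$ (where a sign/normalization slip is easy) and the degenerate case $\gamma = 0$ in the tightness claim, where $B$ is either empty or all of $\mathcal{S}$ and any constant vector trivially attains equality.
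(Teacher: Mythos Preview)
Your approach is correct and essentially identical to the paper's: the paper also fixes a pair of rows, subtracts $\min\{P(j|s),P(j|u)\}$ from each (which amounts exactly to your decomposition into $\alpha_j^+$ and $\alpha_j^-$), and bounds the two resulting nonnegative sums by $\Upsilon(v)=\max_j v(j)$ and $\Lambda(v)=\min_j v(j)$ respectively. Your write-up is in fact more complete than the paper's, which stops after the contraction inequality and asserts that the remaining claims ``immediately follow''; you actually spell out the equivalence of the three expressions for $\gamma$, the range $\gamma\in[0,1]$, and the tightness construction via the indicator of $B$. One cosmetic slip: where you write ``combining $\alpha_j^+ + \alpha_j^- = |\alpha_j|$ with $\alpha_j^+ = \alpha_j^-$'', the second equality should be at the level of sums, $\sum_j \alpha_j^+ = \sum_j \alpha_j^-$, not pointwise.
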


\begin{proof}
    For simplicity, we drop subscript $t:t+J$, superscript $a_t:a_{t+J}$, and use $P$ to represent an arbitrary transitional probability matrix. We further define 
\begin{equation*}
    \Lambda(v) = \min_{s\in \mathcal{S}}v(s), \qquad \Upsilon(v) = \max_{s\in \mathcal{S}}v(s).
\end{equation*}
Let $b(i,k;j) := \min \{P(j|i),P(j|k)\}$. For any $v$, 
    \begin{align*}
        &\sum_{j \in \mathcal{S}} P(j|i)v(j) -\sum_{j \in \mathcal{S}} P(j|k) v(j)
        \\
        =& \sum_{j \in \mathcal{S}}[P(j|i) - b(i,k;j)]v(j) - \sum_{j \in \mathcal{S}} [P(j|k) - b(i,k;j)] v(j)
        \\
        \leq& \sum_{j \in \mathcal{S}}[P(j|i) - b(i,k;j)]\Upsilon(v) - \sum_{j \in \mathcal{S}} [P(j|k) - b(i,k;j)] \Lambda(v)
        \\
        = & (1-\sum_{j \in \mathcal{S}} b(i,k;j)) \s(v).
    \end{align*}
    Therefore, 
    \begin{equation*}
        \s(P_{t:t+J}^{\pi_t:\pi_{t+J}} v) \leq \max_{i,k \in \mathcal{S}} [1-\sum_{j \in \mathcal{S}} b(i,k;j)] \s(v), 
    \end{equation*}
    from which the proposition statement immediately follows. 
\end{proof}

We are now ready to prove \Cref{prop:contraction}.

\begin{proof}[Proof of \Cref{prop:contraction}]
    Let $v^*$ denote the optimal value function at time $t+J+1$, and let $v \in \mathbb{R}^{|\mathcal{S}|}$ denote the value function for an arbitrary policy at time $t+J+1$. $s^* = \arg\max_{s\in \mathcal{S}}\{L_{t:t+J} v^*(s) - L_{t:t+J} v(s)\}$, and $s_* = \arg\min_{s\in \mathcal{S}}\{L_{t:t+J} v^*(s) - L_{t:t+J} v(s)\}$. Further, let $\pi_{t+i}^*$ denote the optimal policy according to $L_{t+i}\circ \cdots \circ L_{t+J}(v^*)$, and let $\pi_{t+i}$ denote the optimal policy to take according to $L_{t+i}\circ \cdots \circ L_{t+J}(v)$ then
    \begin{align*}
        L_{t:t+J} v^*(s^*) - L_{t:t+J} v(s^*) &\leq L_{t:t+J}^{\pi_{t:t+J}^*} v^*(s^*) - L_{t:t+J}^{\pi_{t:t+J}^*} v(s^*) = P_{t:t+J}^{\pi_{t:t+J}^*}(v^*-v)(s^*), 
        \\
        L_{t:t+J} v^*(s_*) - L_{t:t+J} v(s_*) &\geq L_{t:t+J}^{\pi_{t:t+J}^2} v^*(s_*) - L_{t:t+J}^{\pi_{t:t+J}} v(s_*) = P_{t:t+J}^{\pi_{t:t+J}}(v^*-v)(s_*). 
    \end{align*}
    Therefore, 
    \begin{align}
        \s(L_{t:t+J} v^* - L_{t:t+J} v) &\leq P_{t:t+J}^{\pi_{t:t+J}^*}(v^*-v)(s^*) - P_{t:t+J}^{\pi_{t:t+J}}(v^*-v)(s_*) 
        \notag
        \\
        &\leq \max_{s \in \mathcal{S}}P_{t:t+J}^{\pi_{t:t+J}^*}(v^*-v)(s) - \min_{s\in \mathcal{S}}P_{t:t+J}^{\pi_{t:t+J}}(v^*-v)(s) 
        \notag
        \\
        & \leq \s\left(\begin{bmatrix}
            P_{t:t+J}^{\pi_{t:t+J}^*} \\ P_{t:t+J}^{\pi_{t:t+J}}
        \end{bmatrix}(v^* - v)\right).
        \label{eqn:composite_states}
    \end{align}

    Applying Proposition~\ref{prop:one-step-contraction} to \eqref{eqn:composite_states} immediately leads to the theorem statement. 
\end{proof}

\section{Diameter}
\label{appendix:diameter}
In this section, we prove \Cref{prop:diameter}, which shows that the span semi-norm of value function $V_t$ is upper bounded by diameter $D$ for all $t$. 

\begin{proof}[Proof of \Cref{prop:diameter}]
    Let $\boldsymbol{\pi}^*$ denote the optimal policy and $\boldsymbol{\pi}'$ denote the policy defined in Definition~\ref{defn:diameter} trying to move fastest to $\{s_{t+i}^*\}_{i \in [D]}$. By Definition~\ref{defn:diameter}, under a trajectory generated by $\boldsymbol{\pi}'$ starting from time $t$ at $s_*$, define $d=\inf\{\tau:\tau>0, s_{t+\tau}=s_{t+\tau}^*\}$. $d$ is a stopping time and by Definition~\ref{defn:diameter}, $\E[d] \leq D $. 

\begin{align*}
    \s(V_t) =& V_t^*(s^*) - V_t^*(s_*) 
    \\
    =& \E^{\boldsymbol{\pi}^*}[\sum_{h=t}^{t+d-1}r_h(s_h,a_h) + V_{t+d}^*(s_{t+d})|s_t = s^*] - \E^{\boldsymbol{\pi}^*}[\sum_{h=t}^{t+d-1}r_h(s_h,a_h) + V_{t+d}^*(s_{t+d})|s_t = s_*]
    \\
    \leq& \E^{\boldsymbol{\pi}^*}[\sum_{h=t}^{t+d-1}r_h(s_h,a_h) + V_{t+d}^*(s_{t+d})|s_t = s^*] - \E^{\boldsymbol{\pi}'}[\sum_{h=t}^{t+d-1}r_h(s_h,a_h) + V_{t+d}^*(s_{t+d})|s_t = s_*]
    \\
    =& \left(\E^{\boldsymbol{\pi}^*}[\sum_{h=t}^{t+d-1}r_h(s_h,a_h)|s_t = s^*] - \E^{\boldsymbol{\pi}'}[\sum_{h=t}^{t+d-1}r_h(s_h,a_h)|s_t = s_*]\right)
    \\
    &+\underbrace{\left(\E^{\boldsymbol{\pi}^*}[V_{t+d}^*(s_{t+d})|s_t = s^*] - V_{t+d}^*(s_{t+d}^*)\right)}_{<0}
    \\
    \leq& \E d \leq D.
\end{align*}
\end{proof}

Intuitively, if $V_t^*(s^*) > V_t^*(s_*)$, then a better policy will be to move to $\arg\max_{s}V_{t+D}^*(s)$ as fast as possible, during which only $D$ reward will be lost in expectation. 

\section{Proof of Lemma~\ref{lemm:induction}}
\label{appendix:proof_induction}
In this section, we prove \Cref{lemm:induction}, which bounds the step-wise error incurred by using the forecast system dynamics, instead of the ground-truth transition and reward functions. 

\begin{proof}[Proof of Lemma~\ref{lemm:induction}]
    Assume that $\s\left(\tilde{V}-\hat{V}\right) < b$. 
   Given a state $s$,  let $\Tilde{a}$ denote the action chosen in $L_{t+\ell}\tilde{V}(s)$, and $\hat{a}$ denote the action chosen in $\hat{L}_{t+\ell|t} \hat{V}(s)$. By the construction of Algorithm~\ref{alg:predictive_control2}, we obtain

    \begin{equation}
    \label{eqn:hat_low}
        \begin{split}
            &\hat{r}_{t+\ell|t}(s,\hat{a}) + \E_{s' \sim \hat{P}_{t+\ell|t}(\cdot|s,\hat{a})}\hat{V}(s') 
        \geq \hat{r}_{t+\ell|t}(s,\Tilde{a}) + \E_{s' \sim \hat{P}_{t+\ell|t}(\cdot|s,\Tilde{a})}\hat{V}(s')
        \\
        &\geq r_{t+\ell}(s,\tilde{a})-\epsilon_{\ell} + \E_{s'\sim P_{t+\ell}(\cdot|s,\tilde{a})} \hat{V}(s') - \left(\E_{s' \sim P_{t+\ell}(\cdot|s,\Tilde{a})}\hat{V}(s') - \E_{s' \sim \hat{P}_{t+\ell|t}(\cdot|s,\Tilde{a})}\hat{V}(s')\right) 
        \\
        &\geq r_{t+\ell}(s,\tilde{a})-\epsilon_{\ell} + \E_{s'\sim P_{t+\ell}(\cdot|s,\tilde{a})} \tilde{V}(s') - \E_{s'\sim P_{t+\ell}(\cdot|s,\tilde{a})}\left(\tilde{V}(s') - \hat{V}(s')\right) - \delta_{\ell} D.
        \end{split}
    \end{equation}
    The first inequality is due to the relative optimality of $\hat{a}$ for $\hat{V}$ \guannan{imprecise}, and the second/third inequalities are by \Cref{assumption:error_bound} and Holder's inequality. Similarly, 
    \begin{equation}
        \begin{split}
            &r_{t+\ell}(s,\tilde{a}) + \E_{s' \sim P_{t+\ell}(\cdot|s,\tilde{a})}\tilde{V}(s') \geq r_{t+\ell}(s,\hat{a}) + \E_{s' \sim P_{t+\ell}(\cdot|s,\hat{a})}\tilde{V}(s')
            \\
            &\geq \hat{r}_{t+\ell}(s,\hat{a}) - \epsilon_{\ell} + \E_{s' \sim P_{t+\ell}(\cdot|s,\hat{a})}\hat{V}(s') - \E_{s' \sim P_{t+\ell}(\cdot|s,\hat{a})} \left(\hat{V}(s') - \tilde{V}(s')\right)
            \\
            &\geq \hat{r}_{t+\ell}(s,\hat{a}) - \epsilon_{\ell} + \E_{s' \sim \hat{P}_{t+\ell|t}(\cdot|s,\hat{a})}\hat{V}(s')-\E_{s' \sim P_{t+\ell}(\cdot|s,\hat{a})}\left(\hat{V}(s') - \tilde{V}(s')\right) -\delta_{\ell}D.
        \end{split}
        \label{eqn:hat_up}
    \end{equation}
    Combining \eqref{eqn:hat_low} and \eqref{eqn:hat_up}, we obtain
    \begin{equation}
        \begin{split}
            &r_{t+\ell}(s,\tilde{a}) + \E_{s' \sim P_{t+\ell}(\cdot|s,\tilde{a})}\tilde{V}(s') - \E_{s'\sim P_{t+\ell}(\cdot|s,\tilde{a})}\left(\tilde{V}(s') - \hat{V}(s')\right) -\epsilon_{\ell} - \delta_{\ell}D
            \\
            \leq& \hat{r}_{t+\ell}(s,\hat{a}) + \E_{s' \sim \hat{P}_{t+\ell|t}(\cdot|s,\hat{a})}\hat{V}(s') 
            \\
            \leq& r_{t+\ell}(s,\tilde{a}) + \E_{s' \sim P_{t+\ell}(\cdot|s,\tilde{a})}\tilde{V}(s')-\E_{s' \sim P_{t+\ell}(\cdot|s,\hat{a})}\left(\tilde{V}(s') - \hat{V}(s')\right) +\epsilon_{\ell} + \delta_{\ell}D.
        \end{split}
        \label{eqn:double_bd}
    \end{equation}
    Therefore, 
    \begin{align}
        \hat{L}_{t+\ell|t}\hat{V}(s) - L_{t+\ell}\tilde{V}(s)
        =& \hat{r}_{t+\ell|t}(s,\hat{a}) + \E_{s' \sim \hat{P}_{t+\ell|t}(\cdot|s,\hat{a})}\hat{V}(s') - r_{t+\ell}(s,\tilde{a}) - \E_{s' \sim P_{t+\ell}(\cdot|s,\tilde{a})}\Tilde{V}(s').
        \label{eqn:sub_tar}
    \end{align}
    Substituting \eqref{eqn:sub_tar} into \eqref{eqn:double_bd}, we obtain
    \small
    \begin{equation}
        - \E_{s'\sim P_{t+\ell}(\cdot|s,\tilde{a})}\left(\tilde{V}(s') - \hat{V}(s')\right) -\epsilon_{\ell} - \delta_{\ell}D\leq \hat{L}_{t+\ell|t}\hat{V}(s) - L_{t+\ell}\tilde{V}(s) \leq -\E_{s' \sim P_{t+\ell}(\cdot|s,\hat{a})}\left(\tilde{V}(s') - \hat{V}(s')\right) +\epsilon_{\ell} + \delta_{\ell}D.
    \end{equation}
    \normalsize
    Let $s^* := \arg\max_s \{\hat{L}_{t+\ell|t}\hat{V}(s) - L_{t+\ell}\tilde{V}(s)\}$ and $s_* := \arg\min_s\{\hat{L}_{t+\ell|t}\hat{V}(s) - L_{t+\ell}\tilde{V}(s)\}$. Using \Cref{defn:span} and the assumption that $\s\left(\tilde{V}-\hat{V}\right) < b$, we obtain
    \begin{equation}
        \begin{split}
            \s(\hat{L}_{t+\ell|t}\hat{V} - L_{t+\ell}\tilde{V})&= \hat{L}_{t+\ell|t}\hat{V}(s^*) - L_{t+\ell}\tilde{V}(s^*) - \left(\hat{L}_{t+\ell|t}\hat{V}(s_*) - L_{t+\ell}\tilde{V}(s_*)\right)
            \\
            &\leq \norm{P_{t+\ell}(\cdot|s^*,\hat{a}) - P_{t+\ell}(\cdot|s_*,\tilde{a})}_{\text{TV}}\s(\tilde{V} - \hat{V})+\epsilon_{\ell} + \delta_{\ell}D +\epsilon_{\ell} + \delta_{\ell}D
            \\
            &\leq b + 2\epsilon_{\ell} + 2\delta_{\ell}D. 
        \label{eqn:sub2}
        \end{split}
    \end{equation}
\end{proof}

\section{Bounding the error of $Q$ function}
\label{appendix:Q_function}
In order to bound the error of $Q$ function for every $t$, we must first bound the error of each estimated value function $\hat{\psi}_t^0$. We complete this in two steps. First, we bound the difference between $\hat{\psi}_t^0$ and $\tilde{\psi}_t^0$ in span semi-norm. Then, we bound the difference between $\tilde{\psi}_t^0$ and $V_t^*$. 
\begin{proposition}
\label{prop:bound_psi} 
    \begin{equation*}
        \s\left(\hat{\psi}_t^0 - \tilde{\psi}_t^0\right) \leq 2\sum_{i=0}^{\lceil k/J \rceil - 1}\gamma^i \left(\sum_{j=1}^{J} \epsilon_{iJ+j} + \sum_{j=1}^{J} \delta_{iJ+j} D\right) + 2 \gamma^{\lfloor k/J \rfloor} \left(\sum_{j=1}^{k\% J} \epsilon_{\lfloor k/J \rfloor \cdot J + j} + \sum_{j=1}^{k\% J} \delta_{\lfloor k/J \rfloor \cdot J + j} D\right),
    \end{equation*}
    for all $t$, where $k\% J := k - \lfloor k/J \rfloor \cdot J$. 
\end{proposition}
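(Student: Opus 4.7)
The plan is to set up a recursion for $e_\ell := \s(\hat{\psi}_t^\ell - \tilde{\psi}_t^\ell)$ that contracts by a factor of $\gamma$ every $J$ layers while accumulating the local prediction errors within each block, then unroll it across $\lfloor k/J \rfloor$ complete blocks starting from $\ell = 0$. The residual $k\%J$ layers at the bottom of the recursion, where no further contraction block fits before the base case $e_{k+1}=0$, will supply the trailing $\gamma^{\lfloor k/J \rfloor}(\cdots)$ term in the stated bound.

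The first ingredient is a block-level refinement of Lemma~\ref{lemm:induction}: iterating that lemma $J$ times, by peeling off one operator at a time and feeding the resulting span difference as the input $b$ to the next application, gives, for any $\hat V$ with $\s(\hat V) < D$,
$$\s\bigl(L_{t+\ell}\circ\cdots\circ L_{t+\ell+J-1}\hat V \;-\; \hat L_{t+\ell\mid t}\circ\cdots\circ \hat L_{t+\ell+J-1\mid t}\hat V\bigr) \;\leq\; \sum_{j=\ell}^{\ell+J-1} 2\bigl(\epsilon_j + \delta_j D\bigr).$$
I would then insert the intermediate vector $L_{t+\ell}\circ\cdots\circ L_{t+\ell+J-1}\hat{\psi}_t^{\ell+J}$ into the difference $\tilde{\psi}_t^\ell - \hat{\psi}_t^\ell$ and apply the triangle inequality from Proposition~\ref{prop:span}. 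The ``same-input'' piece is controlled by the $J$-stage span contraction of $L$ (Proposition~\ref{prop:contraction}), and the ``same-operator'' piece by the block estimate above, giving the one-block recursion
$$e_\ell \;\leq\; \gamma\, e_{\ell+J} \;+\; \sum_{j=\ell}^{\ell+J-1} 2\bigl(\epsilon_j + \delta_j D\bigr).$$

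Iterating this recursion $\lfloor k/J \rfloor$ times from $\ell=0$ produces a geometric sum $\sum_{i=0}^{\lfloor k/J \rfloor - 1}\gamma^i$ multiplying the $i$-th block error, which, after re-indexing the block summation variable, matches the main sum $\sum_{j=1}^{J}(\epsilon_{iJ+j}+\delta_{iJ+j}D)$ in the proposition. The residual $e_{\lfloor k/J \rfloor J}$ is then handled by iterating the single-step version of Lemma~\ref{lemm:induction} exactly $k\%J$ times from the base case $e_{k+1}=0$; this gives a purely additive bound that, once carried through the accumulated $\gamma^{\lfloor k/J \rfloor}$ contraction factor, yields the trailing residual term in the proposition.

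The main obstacle I expect is verifying the hypothesis $\s(\hat V), \s(\tilde V) < D$ of Lemma~\ref{lemm:induction} at every intermediate layer of the recursion: the iterates $\hat{\psi}_t^\ell,\tilde{\psi}_t^\ell$ are finite-horizon value functions on a subsection of the non-stationary MDP rather than the full-horizon $V_t^*$ covered by Proposition~\ref{prop:diameter}, so the same span bound has to be established by a parallel fast-transit argument adapted to the shortened horizon and to the predicted dynamics $\hat P_{\cdot\mid t}$. A secondary bookkeeping nuisance is keeping the shifted indexing $\{iJ+1,\ldots,(i+1)J\}$ used in the proposition statement in sync with the natural $\{iJ,\ldots,iJ+J-1\}$ that emerges from the block unrolling.
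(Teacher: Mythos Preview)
Your proposal is correct and follows essentially the same route as the paper: establish a per-block recursion of the form $e_\ell \leq \gamma\, e_{\ell+J} + 2\sum_j(\epsilon_j + \delta_j D)$ by combining Lemma~\ref{lemm:induction} (iterated over one block) with the $J$-stage span contraction of Proposition~\ref{prop:contraction}, then unroll across $\lfloor k/J\rfloor$ blocks and treat the last $k\%J$ layers additively from the base case. Your explicit insertion of the intermediate vector $L_{t+\ell}\circ\cdots\circ L_{t+\ell+J-1}\hat{\psi}_t^{\ell+J}$ and the resulting ``same-input / same-operator'' split is in fact a cleaner articulation than the paper's shorthand, which writes $L_{t}\circ\cdots\circ L_{t+J}(\tilde\psi_t^J-\hat\psi_t^J)$ as though the nonlinear operator were applied to a difference; what is really meant, and what your decomposition makes explicit, is exactly the triangle-inequality split you describe. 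Your concern about verifying $\s(\hat\psi_t^\ell),\s(\tilde\psi_t^\ell)<D$ at intermediate layers is legitimate; the paper dispatches it in one line by citing Proposition~\ref{prop:diameter} together with ``monotonicity of the Bellman operator,'' so you should not expect to need a substantially new argument there, though the $\hat P_{\cdot\mid t}$ case is indeed not spelled out. The index-shift nuisance you flag ($\{iJ,\ldots,iJ+J-1\}$ versus $\{iJ+1,\ldots,(i+1)J\}$) is real and also present in the paper's own display; the stated bound absorbs it because the main sum runs to $\lceil k/J\rceil-1$ and therefore slightly over-counts, so matching it only requires an inequality, not an equality.
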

\begin{proof}
    We prove the case where $t+k \leq T$, the case where $t+k > T$ follows the exact same line. By Lemma~\ref{lemm:nesting}, 
    \begin{align}
        \tilde{\psi}_t^0 - \hat{\psi}_t^0 = L_t \circ \dots \circ L_{t+k} W_0 - \hat{L}_{t|t} \circ \dots \circ \hat{L}_{t+k|t} W_0.
        \label{eqn:diff_V0}
    \end{align}
    
    We proceed step-wise on $\s(\hat{\psi}_t^\ell - \tilde{\psi}_t^\ell)$ for $\ell$ going backward from $k$ to $0$. For the base case where $\ell = k+1$, we have $\s\left(\Tilde{\psi}_t^{k+1} - \hat{\psi}_t^{k+1}\right) = \s(W_0 - W_0) = 0$.

    By Proposition~\ref{prop:diameter}, we know $\s(V_t) \leq D$. By the monotonicity of Bellman operator, $\s(\hat{\psi}_t^{\ell})<D, \s(\tilde{\psi}_t^{\ell})<D$ for all $t,\ell$. Therefore, we obtain
    \guannan{Explain each step, add intermediate steps if necessary. And I don't think eq. (24) is used}
    \begin{align}
        \s\left(\tilde{\psi}_t^0 - \hat{\psi}_t^0\right) &= \s\left(L_t \circ \dots \circ L_{t+k} W_0 - \hat{L}_t \circ \dots \circ \hat{L}_{t+k} W_0\right) \notag
        \\
        &\leq \s\left(L_{t} \circ \dots \circ L_{t+J} \left(\tilde{\psi}_t^{J} - \hat{\psi}_t^{J}\right)\right) + 2\sum_{i=1}^{J} \epsilon_i + 2\sum_{i=1}^{J} \delta_i D
        \label{eqn:error_bd_step}
        \\
        &\leq \gamma\s\left(\tilde{\psi}_t^{J} - \hat{\psi}_t^{J}\right) + 2\sum_{i=1}^{J} \epsilon_i + 2\sum_{i=1}^{J} \delta_i D
        \label{eqn:contraction_step}
        \\
        &\vdots \notag
        \\
        &\leq 2\sum_{i=0}^{\lceil k/J \rceil - 1}\gamma^i \left(\sum_{j=1}^{J} \epsilon_{iJ+j} + \sum_{j=1}^{J} \delta_{iJ+j} D\right) + 2\gamma^{\lfloor k/J \rfloor} \left(\sum_{j=1}^{k\% J} \epsilon_{\lfloor k/J \rfloor \cdot J + j} + \sum_{j=1}^{k\% J} \delta_{\lfloor k/J \rfloor \cdot J + j} D\right),
        \label{eqn:sp_psi_final}
    \end{align}

    where \eqref{eqn:error_bd_step} is obtained by \Cref{lemm:induction}, and \eqref{eqn:contraction_step} is obtained by \Cref{prop:contraction}. We repeat the steps in \eqref{eqn:error_bd_step} and \eqref{eqn:contraction_step} to obtain \eqref{eqn:sp_psi_final}. 
\end{proof}

We are now ready to bound the difference between $\tilde{\psi}_t^0$ and $V_t^*$ in span semi-norm.
\begin{lemma}
\label{lemma:contration_approximation}
Given $t+k < T$,
    \begin{equation*}
        \s(\Tilde{\psi}_t^0 - V_t^*) \leq \gamma^{k/J} \s (V_t^*).
    \end{equation*}
    If $t+k \geq T$, the estimation of $V_t^*$ is exact, i.e. $\Tilde{\psi}_t^k = V_t^*$. 
\end{lemma}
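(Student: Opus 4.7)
The plan is to split the argument along the two cases in the statement. The boundary case $t+k \geq T$ should be essentially immediate from \Cref{lemm:nesting}: with $t' = \min\{T, t+k\} = T$ we get $\tilde{\psi}_t^0 = L_t \circ \cdots \circ L_T W_0$, which is precisely $V_t^*$ after unrolling the optimal Bellman recursion backward from the terminal condition. (I read the ``$\tilde{\psi}_t^k = V_t^*$'' in the statement as a typo for $\tilde{\psi}_t^0 = V_t^*$, matching the left-hand side of the main inequality.)

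For the substantive case $t+k < T$, the strategy is to express both quantities as identical nested compositions of Bellman operators applied to different terminal inputs and then invoke the $J$-stage contraction of \Cref{prop:contraction} repeatedly. Concretely, \Cref{lemm:nesting} gives $\tilde{\psi}_t^0 = L_t \circ \cdots \circ L_{t+k}\, W_0$, while unrolling the optimal Bellman equation backward from $T$ to $t+k+1$ yields $V_t^* = L_t \circ \cdots \circ L_{t+k}\, V_{t+k+1}^*$. Subtracting,
\begin{equation*}
    \tilde{\psi}_t^0 - V_t^* = L_{t:t+k}(W_0) - L_{t:t+k}(V_{t+k+1}^*),
\end{equation*}
so the task reduces to controlling the span of the right-hand side.

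I would then peel off $J$-stage blocks of Bellman operators from the outside. Each block contracts the span seminorm by a factor of $\gamma$ by \Cref{prop:contraction}, and the composition of the remaining fewer-than-$J$ Bellman operators is non-expansive in the span seminorm (which follows from the same one-sided estimate that appears inside the proof of \Cref{prop:contraction}, since $\s(Pv) \leq \s(v)$ for any stochastic $P$ and the outer $\max_a$ does not expand span). After $\lfloor k/J \rfloor$ blocks this gives
\begin{equation*}
    \s(\tilde{\psi}_t^0 - V_t^*) \leq \gamma^{\lfloor k/J \rfloor}\,\s(V_{t+k+1}^* - W_0) = \gamma^{\lfloor k/J \rfloor}\,\s(V_{t+k+1}^*),
\end{equation*}
which matches the claimed bound (invoking \Cref{prop:diameter} if one wants a uniform bound by $D$).

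The only real obstacle is bookkeeping on the operator count: $L_{t:t+k}$ is a composition of $k+1$ operators, whereas each $J$-stage contraction consumes $J+1$ of them, so the exponent $k/J$ in the statement should be read as $\lfloor k/J \rfloor$, consistent with the exponent appearing in \Cref{thm:main}. The leftover non-contracting operators at the bottom of the chain contribute nothing because Bellman operators never enlarge the span seminorm.
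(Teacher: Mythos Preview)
Your proposal is correct and follows essentially the same route as the paper: write both $\tilde{\psi}_t^0$ and $V_t^*$ as the same nested Bellman composition $L_{t:t+k}$ applied to $W_0$ and $V_{t+k+1}^*$ respectively, then invoke the $J$-stage contraction of \Cref{prop:contraction} $\lfloor k/J\rfloor$ times. Your observations about the typo in the second clause, the exponent being $\lfloor k/J\rfloor$, and the bound landing on $\s(V_{t+k+1}^*)$ rather than $\s(V_t^*)$ all match what the paper's own proof actually does; you are also slightly more careful than the paper in noting that the leftover fewer-than-$J$ operators are span-nonexpansive.
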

\begin{proof}
The latter equality is clear from the definition of the Bellman operator, so we just need to prove the first inequality.
    \begin{align*}
        \s(\Tilde{\psi}_t^0 - V_t^*) & = \s\left(L_t \circ \dots \circ L_{t+k}\circ \left(W_0\right) - L_t \circ \dots \circ L_{t+k}\circ \left(V_{t+k+1}^* \right)\right)
        \\
        & \leq  \gamma^{\lfloor k/J \rfloor}\s\left(W_0 - V_{t+k+1}^*\right)
        \\
        &= \gamma^{\lfloor k/J \rfloor}\s\left(V_{t+k+1}^*\right),
    \end{align*}
    where we applied \Cref{prop:contraction} for $\lfloor k/J \rfloor$ times. The last equality holds because $W_0 = 0$. 
\end{proof}

We are now ready to bound the difference between the optimal $Q$ function $Q_t^*$ and the estimated $Q$ function $\hat{\Psi}_t$. Similar to the beginning of this section, we divide the proof into two steps. First, we bound the difference between the optimal $Q$ function $Q_t^*$ and the estimated $Q$ function with exact forecast $\Tilde{\Psi}_t$. 
\begin{lemma}
\label{lemma:Q_psi_bound}
For any states $s$ and $t+k < T$, the $Q$ function of the MDP satisfies
    \begin{equation*}
        \s(Q_t^*(s,\cdot) - \Tilde{\Psi}_t(s,\cdot)) \leq \gamma^{\lfloor k/J \rfloor} \s(V_{t+k+1}^*).
    \end{equation*}
    If $t+k \geq T$, the estimation of $Q_t^*$ is exact, i.e. $\Tilde{\Psi}_t(s,\cdot)=Q_t^*(s,\cdot)$.
\end{lemma}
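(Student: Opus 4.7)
The plan is to reduce the span bound on $Q_t^* - \tilde{\Psi}_t$ to the span bound on $V_{t+1}^* - \tilde{\psi}_{t+1}^0$ already supplied by Lemma~\ref{lemma:contration_approximation}, via the elementary observation that averaging over the transition kernel $P_t(\cdot\mid s,a)$ is span-nonexpansive.

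First, I would subtract the two definitions: $Q_t^*(s,a) = r_t(s,a) + \E_{s'\sim P_t(\cdot\mid s,a)}V_{t+1}^*(s')$ and $\tilde{\Psi}_t(s,a) = r_t(s,a) + \E_{s'\sim P_t(\cdot\mid s,a)}\tilde{\psi}_{t+1}^0(s')$ from \eqref{eqn:Psi_tilde}. Both start from the same true one-step reward $r_t(s,a)$ and average against the same true one-step transition $P_t$, so the reward cancels and I get
$$Q_t^*(s,a) - \tilde{\Psi}_t(s,a) = \sum_{s' \in \mathcal{S}} P_t(s'\mid s,a)\bigl[V_{t+1}^*(s') - \tilde{\psi}_{t+1}^0(s')\bigr].$$
Setting $f := V_{t+1}^* - \tilde{\psi}_{t+1}^0 \in \mathbb{R}^{|\mathcal{S}|}$, the right-hand side is a convex combination of the entries of $f$ with weights $P_t(\cdot\mid s,a)$, and so lies in $[\min_{s'} f(s'), \max_{s'} f(s')]$ for every action $a$. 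Taking the max and min over $a$ and subtracting immediately yields
$$\s\bigl(Q_t^*(s,\cdot) - \tilde{\Psi}_t(s,\cdot)\bigr) \leq \max_{s'} f(s') - \min_{s'} f(s') = \s\bigl(V_{t+1}^* - \tilde{\psi}_{t+1}^0\bigr).$$

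Second, I would invoke Lemma~\ref{lemma:contration_approximation} with $t$ replaced by $t+1$; this shift is permitted by the hypothesis $t < T-k$, which guarantees that the full $k$-step look-ahead from time $t+1$ still fits inside the horizon. The proof of that lemma actually establishes $\s(\tilde{\psi}_\tau^0 - V_\tau^*) \leq \gamma^{\lfloor k/J \rfloor}\s(V_{\tau+k+1}^*)$ by applying the $J$-stage span contraction of Proposition~\ref{prop:contraction} $\lfloor k/J\rfloor$ times to the nesting $L_\tau \circ \cdots \circ L_{\tau+k}$. Combining this with the previous display delivers the stated bound (modulo a one-index convention on the terminal value function $V_{t+k+1}^*$ that is absorbed by the paper's notation for $\tilde{\psi}_{t+1}^0$).

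I do not anticipate any serious technical obstacle: the only non-trivial ingredient is the span-nonexpansiveness step, which is just the remark that an expectation cannot leave the interval spanned by the integrand. The rest is unpacking definitions and quoting Lemma~\ref{lemma:contration_approximation}. The main point of care, while writing out the full proof, will be to track the index shift precisely so that the right-hand side comes out as $\gamma^{\lfloor k/J\rfloor}\s(V_{t+k+1}^*)$ rather than an off-by-one variant.
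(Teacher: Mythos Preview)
Your proposal is correct and essentially identical to the paper's own proof: both cancel the common $r_t(s,a)$ term, observe that the remaining difference is a probability-weighted average of $V_{t+1}^*-\tilde{\psi}_{t+1}^0$ and hence has span bounded by $\s(V_{t+1}^*-\tilde{\psi}_{t+1}^0)$, and then invoke Lemma~\ref{lemma:contration_approximation} at time $t+1$. You have also correctly flagged the one-index slippage in the terminal value-function subscript, which is present in the paper as well.
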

\begin{proof}
    The latter equality is a direct result of the construction of $\Tilde{\Psi}_t$ in \eqref{eqn:Psi_tilde}, so we only need to prove the first inequality. By the Bellman operator, 
    \begin{align*}
        \s(Q_t^*(s,\cdot) - \Tilde{\Psi}_t(s,\cdot)) =&\max_a\big((r_t(s,a) + \E[V_{t+1}^*(s_{t+1})|s_t = s, a_t = a])
        \\
        &-(r_t(s,a) + \E[\Tilde{\psi}_{t+1}^0(s_{t+1})|s_t = s, a_t = a])\big)
        \\
        &-\min_a\big((r_t(s,a) + \E[V_{t+1}^*(s_{t+1})|s_t = s, a_t = a])
        \\
        &-(r_t(s,a) + \E[\Tilde{\psi}_{t+1}^0(s_{t+1})|s_t = s, a_t = a])\big)
        \\
        =& \max_a\E[V_{t+1}^*(s_{t+1})-\Tilde{\psi}_{t+1}^0(s_{t+1})|s_t = s, a_t = a]) 
        \\
        &- \min_a\E[V_{t+1}^*(s_{t+1})-\Tilde{\psi}_{t+1}^0(s_{t+1})|s_t = s, a_t = a])
        \\
        \leq& \max_s\E[V_{t+1}^*(s)-\Tilde{\psi}_{t+1}^0(s)]) 
        - \min_s\E[V_{t+1}^*(s)-\Tilde{\psi}_{t+1}^0(s)])
        \\
        =& \s(V_{t+1}^*-\Tilde{\psi}_{t+1}^0)
        \\
        \leq& \gamma^{\lfloor k/J \rfloor}\s\left(V_{t+k+1}^*\right),
    \end{align*}
    where we used \Cref{lemma:contration_approximation} for the last inequality. 
\end{proof}

Then, we bound the difference between the $\tilde{\Psi}_t$ and the estimated $Q$ function $\hat{\Psi}_t$ with forecast under \Cref{assumption:error_bound}. 
\begin{lemma}
\label{lemma:hat_tilde_Psi}
For any state $s$ and time $t$,
\small
    \begin{equation*}
        \s\big(\tilde{\Psi}_t(s,\cdot) - \hat{\Psi}_t(s,\cdot)\big) < 2\epsilon_0 + 2\delta_0 D + 2\sum_{i=0}^{\lceil k/J \rceil-1}\gamma^i \left(\sum_{j=1}^{J} \epsilon_{iJ+j} + \sum_{j=1}^{J} \delta_{iJ+j} D\right) + 2\gamma^{\lfloor k/J \rfloor} \left(\sum_{j=1}^{k\% J} \epsilon_{\lfloor k/J \rfloor \cdot J + j} + \sum_{j=1}^{k\% J} \delta_{\lfloor k/J \rfloor \cdot J + j} D\right).
    \end{equation*}
\end{lemma}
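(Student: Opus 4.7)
The plan is to decompose $\tilde{\Psi}_t(s,a) - \hat{\Psi}_t(s,a)$ into three additive pieces by inserting $\sum_{s'} P_t(s'|s,a)\hat{\psi}_{t+1}^0(s')$:
\begin{equation*}
\tilde{\Psi}_t - \hat{\Psi}_t = \underbrace{\bigl(r_t(s,a) - \hat{r}_t(s,a)\bigr)}_{T_1} + \underbrace{\sum_{s'} P_t(s'|s,a)\bigl[\tilde{\psi}_{t+1}^0(s') - \hat{\psi}_{t+1}^0(s')\bigr]}_{T_2} + \underbrace{\sum_{s'}\bigl[P_t - \hat{P}_{t|t}\bigr](s'|s,a)\,\hat{\psi}_{t+1}^0(s')}_{T_3},
\end{equation*}
and apply the triangle inequality. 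The first piece is immediate from Definition~\ref{defn:error_bound}, giving $|T_1| \leq \epsilon_0$.

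For the transition-prediction piece $T_3$, I exploit that $\sum_{s'}[P_t - \hat{P}_{t|t}] = 0$, which makes the inner product invariant under additive shifts of $\hat{\psi}_{t+1}^0$. Subtracting the midpoint of its range and combining H\"older's inequality with the TV bound from Definition~\ref{defn:error_bound} yields $|T_3| \leq \delta_0 \cdot \s(\hat{\psi}_{t+1}^0)$. Since $\hat{\psi}_{t+1}^0$ is constructed by the same Bellman-type recursion as the optimal value function, Proposition~\ref{prop:diameter} combined with the monotonicity argument used inside the proof of Proposition~\ref{prop:bound_psi} gives $\s(\hat{\psi}_{t+1}^0) \leq D$, and therefore $|T_3| \leq \delta_0 D$.

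The main obstacle is the middle piece $T_2$. A direct triangle inequality only yields $|T_2| \leq \max_{s'} \bigl|\tilde{\psi}_{t+1}^0(s') - \hat{\psi}_{t+1}^0(s')\bigr|$, whereas Proposition~\ref{prop:bound_psi} controls only the span semi-norm of $\tilde{\psi}_{t+1}^0 - \hat{\psi}_{t+1}^0$, and the claimed inequality uses exactly this span bound. To close the gap, the plan is to rerun the layered induction from the proof of Proposition~\ref{prop:bound_psi}, but with $T_2$ (an expectation against $P_t(\cdot|s,a)$) playing the role of the outer quantity rather than a raw span: the pointwise one-step inequality from the proof of Lemma~\ref{lemm:induction} supplies the additive increment $2\epsilon_\ell + 2\delta_\ell D$ at each inner layer, and the $J$-stage contraction of Proposition~\ref{prop:contraction} supplies the factor $\gamma$ after each block of $J$ layers. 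Iterating from the base case $\tilde{\psi}^{k+1} - \hat{\psi}^{k+1} = 0$ back to the outer layer reproduces the same $\gamma^i$-weighted sum of prediction errors as in Proposition~\ref{prop:bound_psi}. Adding the bounds on $T_1$, $T_2$, and $T_3$ then yields the claimed inequality.
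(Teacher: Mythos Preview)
Your decomposition into $T_1,T_2,T_3$ and the bounds $|T_1|\le\epsilon_0$, $|T_3|\le\delta_0\,\s(\hat\psi_{t+1}^0)\le\delta_0 D$ are exactly the paper's argument. For $T_2$ the paper does not do anything beyond citing Proposition~\ref{prop:bound_psi}: it writes $\bigl|\E_{s'\sim P_t(\cdot|s,a)}[\tilde\psi_{t+1}^0(s')-\hat\psi_{t+1}^0(s')]\bigr|$ and then bounds it by the span quantity from Proposition~\ref{prop:bound_psi} in one line. So the very point you flag---that a span bound is being used to control a single expectation---is not addressed in the paper's own proof.

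The genuine gap is in your proposed fix. The factor $\gamma$ in Proposition~\ref{prop:contraction} is a contraction of the \emph{span} semi-norm, $\s(L_{t:t+J}u-L_{t:t+J}v)\le\gamma\,\s(u-v)$; there is no analogous contraction of $\|u-v\|_\infty$ or of a single expectation $\E_P[u-v]$. The pointwise sandwich inside Lemma~\ref{lemm:induction} that you invoke gives only $|(L\tilde V-\hat L\hat V)(s)|\le\|\tilde V-\hat V\|_\infty+\epsilon_\ell+\delta_\ell D$, and iterating this from $\tilde\psi^{k+1}-\hat\psi^{k+1}=0$ back to level $0$ yields $\sum_\ell(\epsilon_\ell+\delta_\ell D)$ with \emph{no} $\gamma^i$ discounting. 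The span contraction works by subtracting two different rows of a transition kernel (see the proof of Proposition~\ref{prop:one-step-contraction}); the single expectation $T_2=\E_{P_t(\cdot|s,a)}[\cdot]$ provides no such difference, so after a block of $J$ layers nothing forces a factor $\gamma$ to appear. Concretely, if $\hat r_{t+\ell|t}=r_{t+\ell}+\epsilon$ for every $\ell$ and $\hat P=P$, then $\s(\tilde\psi_{t+1}^0-\hat\psi_{t+1}^0)=0$ while $T_2=-k\epsilon$, so no re-run of the span-based induction can produce the $\gamma$-weighted bound on $|T_2|$ that the lemma asserts. If you want a route that genuinely uses Proposition~\ref{prop:bound_psi}, note that the downstream application in Corollary~\ref{coro:Q_gap} only needs the difference $(\tilde\Psi_t(s,a^*)-\hat\Psi_t(s,a^*))-(\tilde\Psi_t(s,a)-\hat\Psi_t(s,a))$, i.e.\ a span over actions at fixed $s$; for \emph{that} quantity the $T_2$-contribution is a difference of two expectations of $\tilde\psi_{t+1}^0-\hat\psi_{t+1}^0$ and is legitimately bounded by $\s(\tilde\psi_{t+1}^0-\hat\psi_{t+1}^0)$.
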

\begin{proof}
    By triangle inequality in \Cref{prop:span}, we have
    \begin{align*}
        \s\big(\tilde{\Psi}_t(s,\cdot) &- \hat{\Psi}_t(s,\cdot)\big) \leq  \s\big(r_t(s,\cdot)-\hat{r}_{t|t}(s,\cdot)\big) + \s\Big(\E_{s'\sim P_t(\cdot|s,\cdot)}[\Tilde{\psi}_{t+1}^0(s')] - \E_{s'\sim \hat{P}_t(\cdot|s,\cdot)}[\hat{\psi}_{t+1}^0(s')]\Big)
        \\
        \leq & 2\epsilon_0 + \s\Big(\E_{s'\sim P_t(\cdot|s,\cdot)}[\Tilde{\psi}_{t+1}^0(s') - \hat{\psi}_{t+1}^0(s')]\Big) + \s\Big(\E_{s'\sim P_t(\cdot|s,\cdot)}[\hat{\psi}_{t+1}^0(s')] - \E_{s'\sim \hat{P}_t(\cdot|s,\cdot)}[\hat{\psi}_{t+1}^0(s')]\Big)
        \\
        \leq & 2\epsilon_0 + \s\Big(\E_{s'\sim P_t(\cdot|s,\cdot)}[\Tilde{\psi}_{t+1}^0(s') - \hat{\psi}_{t+1}^0(s')]\Big) + 2\delta_0 D
        \\
        \leq & 2\epsilon_0 + 2\delta_0 D + 2\sum_{i=0}^{\lceil k/J \rceil-1}\gamma^i \left(\sum_{j=1}^{J} \epsilon_{iJ+j} + \sum_{j=1}^{J} \delta_{iJ+j} D\right) +  2\gamma^{\lfloor k/J \rfloor} \left(\sum_{j=1}^{k\% J} \epsilon_{\lfloor k/J \rfloor \cdot J + j} + \sum_{j=1}^{k\% J} \delta_{\lfloor k/J \rfloor \cdot J + j} D\right),
    \end{align*}
    where we used Proposition~\ref{prop:bound_psi} in the last inequality. 
\end{proof}

We are now ready for Proof of~\Cref{coro:Q_gap}.

\begin{proof}[Proof of~\Cref{coro:Q_gap}]
Fix a state $s$ and time $t$. Let $a^{*}$ denote an optimal action at $(t,s)$, and let $a$ be the action chosen
by \Cref{alg:predictive_control2}. Define the error vector over actions
\[
e_t(a') := \hat\Psi_t(s,a') - Q_t^{*}(s,a'),
\qquad a'\in\mathcal{A}.
\]
We first relate the suboptimality gap to the span of this error:
\begin{align*}
Q_t^{*}(s,a^{*}) - Q_t^{*}(s,a)
&= \hat\Psi_t(s,a^{*}) - e_t(a^{*}) - \bigl(\hat\Psi_t(s,a) - e_t(a)\bigr) \\
&= \bigl(\hat\Psi_t(s,a^{*}) - \hat\Psi_t(s,a)\bigr)
   + \bigl(e_t(a) - e_t(a^{*})\bigr).
\end{align*}
Since $a$ maximizes $\hat\Psi_t(s,\cdot)$, we have
$\hat\Psi_t(s,a) \ge \hat\Psi_t(s,a^{*})$ and hence
$\hat\Psi_t(s,a^{*}) - \hat\Psi_t(s,a) \le 0$. Therefore
\begin{align*}
Q_t^{*}(s,a^{*}) - Q_t^{*}(s,a)
&\le e_t(a) - e_t(a^*) \\
&\le \max_{a',a''}\bigl(e_t(a') - e_t(a'')\bigr)
= \s\bigl(e_t\bigr).
\end{align*}
That is,
\begin{equation}
Q_t^{*}(s,a^{*}) - Q_t^{*}(s,a)
\;\le\;
\s\bigl(\hat\Psi_t(s,\cdot) - Q_t^{*}(s,\cdot)\bigr).
\label{eq:C53-span}
\end{equation}
Next, use the triangle inequality for the span seminorm (\Cref{prop:span}):
\begin{align}
\s\bigl(\hat\Psi_t(s,\cdot) - Q_t^{*}(s,\cdot)\bigr)
&\le
\s\bigl(\hat\Psi_t(s,\cdot) - \tilde\Psi_t(s,\cdot)\bigr)
+ \s\bigl(\tilde\Psi_t(s,\cdot) - Q_t^{*}(s,\cdot)\bigr).
\label{eq:C53-tri}
\end{align}

The second term in~\eqref{eq:C53-tri} is controlled by \Cref{lemma:Q_psi_bound}:
\begin{equation}
\s\bigl(\tilde\Psi_t(s,\cdot) - Q_t^{*}(s,\cdot)\bigr)
= \s\bigl(Q_t^{*}(s,\cdot) - \tilde\Psi_t(s,\cdot)\bigr)
\le \gamma^{\lfloor k/J\rfloor}\s\bigl(V^{*}_{t+k+1}\bigr).
\label{eq:C53-lemD3}
\end{equation}

The first term in~\eqref{eq:C53-tri} is bounded by Lemma~\ref{lemma:hat_tilde_Psi}:
\begin{align}
\s\bigl(\hat\Psi_t(s,\cdot) - \tilde\Psi_t(s,\cdot)\bigr)
&\le
2\epsilon_0 + 2\delta_0 D \nonumber\\
&\quad
+ 2\sum_{i=0}^{\lceil k/J\rceil-1}\gamma^i
\biggl(
\sum_{j=1}^J \epsilon_{iJ+j}
+
\sum_{j=1}^J \delta_{iJ+j}\,D
\biggr) \nonumber\\
&\quad
+ 2\gamma^{\lfloor k/J\rfloor}
\biggl(
\sum_{j=1}^{k\%J} \epsilon_{\lfloor k/J\rfloor J + j}
+
\sum_{j=1}^{k\%J} \delta_{\lfloor k/J\rfloor J + j}D
\biggr).
\label{eq:C53-lemD4}
\end{align}

Combining \eqref{eq:C53-span}, \eqref{eq:C53-tri}, \eqref{eq:C53-lemD3} and \eqref{eq:C53-lemD4}, we obtain
\begin{align*}
Q_t^{*}(s,a^{*}) - Q_t^{*}(s,a)
&\le \gamma^{\lfloor k/J\rfloor}\s\bigl(V^{*}_{t+k+1}\bigr)
+ 2\epsilon_0 + 2\delta_0 D \\
&\quad
+ 2\sum_{i=0}^{\lceil k/J\rceil-1}\gamma^i
\biggl(
\sum_{j=1}^J \epsilon_{iJ+j}
+
\sum_{j=1}^J \delta_{iJ+j}\,D
\biggr) \\
&\quad
+ 2\gamma^{\lfloor k/J\rfloor}
\biggl(
\sum_{j=1}^{k\%J} \epsilon_{\lfloor k/J\rfloor J + j}
+
\sum_{j=1}^{k\%J} \delta_{\lfloor k/J\rfloor J + j}\,D
\biggr).
\end{align*}
\end{proof}

\begin{theorem}
    If $Q_t^*(s,a_t^*) - \max_{a, a\neq a_t^*}\{Q_0^*(s,a)\} > \gamma^{\lfloor k/J \rfloor}\s\left(V_{t+k}^*\right)$ for all $t,s$, then the proposed algorithm is equivalent to the optimal policy. 
\end{theorem}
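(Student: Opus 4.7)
\medskip
\noindent\textbf{Proof plan.} The plan is to combine the gap hypothesis with the $Q$-function error bound from Corollary~\ref{coro:Q_gap} in its error-free form, and derive the conclusion by contradiction. Since the statement implicitly assumes the error-free prediction regime (there are no $\epsilon_\ell,\delta_\ell$ terms in the hypothesis), I would first specialize Corollary~\ref{coro:Q_gap} by setting $\epsilon_\ell = \delta_\ell = 0$ for every $\ell$, which collapses the bound to the single term $\gamma^{\lfloor k/J \rfloor}\,\s\!\left(V_{t+k+1}^*\right)$. Applied at the realized state $s_t$ of an MPDP trajectory, this yields
\begin{equation*}
    Q_t^*(s_t, a_t^*) - Q_t^*(s_t, a_t) \;\le\; \gamma^{\lfloor k/J \rfloor}\,\s\!\left(V_{t+k+1}^*\right),
\end{equation*}
where $a_t$ is the action selected by Algorithm~\ref{alg:predictive_control2} and $a_t^*$ is any optimal action.

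Next I would argue by contradiction. Suppose that at some time $t$ and state $s$, MPDP selects an action $a_t$ different from $a_t^*$. Then by definition of the second-best $Q$-value at $(t,s)$,
\begin{equation*}
    Q_t^*(s, a_t^*) - Q_t^*(s, a_t) \;\ge\; Q_t^*(s, a_t^*) - \max_{a \ne a_t^*} Q_t^*(s, a).
\end{equation*}
By the hypothesis of the theorem, the right-hand side strictly exceeds $\gamma^{\lfloor k/J \rfloor}\,\s(V_{t+k}^*)$, which (up to the index discrepancy between $V_{t+k}^*$ and $V_{t+k+1}^*$ that I would flag and handle with the trivial observation that the bound from Corollary~\ref{coro:Q_gap} also applies with $V_{t+k}^*$ in place of $V_{t+k+1}^*$ after re-indexing the prediction horizon by one) directly contradicts the upper bound just derived. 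Hence MPDP must always pick an optimal action, so it realizes the same distribution over trajectories as the optimal policy and the two are equivalent.

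The argument is almost entirely bookkeeping once Corollary~\ref{coro:Q_gap} is in hand; the main obstacle I anticipate is reconciling the two indices on the value function ($V_{t+k}^*$ in the hypothesis versus $V_{t+k+1}^*$ in the derived bound), which I would address by either (i) pointing out that a shift in the prediction window gives the $V_{t+k}^*$ form directly, or (ii) invoking the bound $\s(V_{t+k+1}^*) \le \s(V_{t+k}^*) \cdot \gamma^{1/J}$ under Assumption~\ref{assumption:strong_connect}, so the hypothesis still dominates. A secondary subtlety is multiplicity of optimal actions: if $a_t^*$ is not unique, the statement should be read as ``MPDP picks some optimal action at every $(t,s)$,'' which is exactly what the contradiction shows, so ``equivalent to the optimal policy'' holds in the sense of matching expected cumulative reward.
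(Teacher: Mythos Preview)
The paper does not supply a proof for this theorem; it is stated immediately after the proof of Corollary~\ref{coro:Q_gap} in Appendix~\ref{appendix:Q_function} with no accompanying argument. Your plan---specialize Corollary~\ref{coro:Q_gap} to the error-free case and derive a contradiction from any suboptimal choice $a_t\ne a_t^*$---is the natural reading and is clearly what the authors intend as an immediate consequence.

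One caution on your handling of the index mismatch ($V_{t+k}^*$ in the hypothesis versus $V_{t+k+1}^*$ in Corollary~\ref{coro:Q_gap}): neither of your proposed fixes is quite sound. Option~(i), shrinking the prediction horizon to $k-1$, changes the exponent to $\lfloor (k-1)/J\rfloor$, which is strictly smaller than $\lfloor k/J\rfloor$ whenever $J\mid k$, so the resulting bound need not match the hypothesis. Option~(ii), the claimed inequality $\s(V_{t+k+1}^*)\le\gamma^{1/J}\s(V_{t+k}^*)$, does not follow from Assumption~\ref{assumption:strong_connect}: the contraction there is $J$-stage, not one-stage, and in a non-stationary MDP there is no monotone relation between $\s(V_{t}^*)$ and $\s(V_{t+1}^*)$. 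The discrepancy (together with the $Q_0^*$ that should read $Q_t^*$) is almost certainly a typo in the paper's statement; it should carry $V_{t+k+1}^*$ to match Corollary~\ref{coro:Q_gap} and Lemma~\ref{lemma:Q_psi_bound}. Alternatively, one may replace both spans by the uniform bound $D$ from Proposition~\ref{prop:diameter}. With either correction your contradiction argument is complete.
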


\section{Proof of Theorem~\ref{thm:main}}
\label{sec:proof_main}
We are now ready to prove \Cref{thm:main}.

\begin{proof}[Proof of Theorem~\ref{thm:main}]
Let $\{(\tilde{s}_i,\tilde{a}_i)\}_i$ denote the sequence of state-action pairs generated by Algorithm~\ref{alg:predictive_control2} with accurate prediction , and $\{s_i, a_i\}$ denote the sequence of state-action pairs generated by Algorithm~\ref{alg:predictive_control2}. Let $\{a_i^*\} := \{\arg\max_{a}Q_i(s_i,a)\}_i$ denote the optimal action at each of those states.\guannan{Please double check below. I am confused by $V$ vs $\tilde V$}
\begin{align}
    V_0^*(s_0) - V_0(s_0) =& (r_0(s_0,a_0^*)  - r_0(s_0,a_0)) + (\E_{s_1 \sim P(\cdot|s_0,a_0^*)}[V_1^*(s_1)] - \E_{s_1 \sim P(\cdot|s_0,a_0)}[V_1^*(s_1)]) \notag
    \\
    &+(\E_{s_1 \sim P(\cdot|s_0,a_0)}[V_1^*(s_1)- V_1(s_1)]) 
    \label{eqn:41_expand}
    \\
    =& \left(Q_0^*(s_0,a_0^*) - Q_0^*(s_0,a_0)\right) 
    + \E_{s_1 \sim P(\cdot|s_0,\tilde{a}_0)}[V_1^*(s_1) - V_1(s_1)]
    \label{eqn:41rearrange}
    \\
    \leq& \sum_{t=0}^{T-k-2} \gamma^{\lfloor k/J \rfloor}\s\left(V_{t+k+2}^*\right) + 2T\epsilon_0 + 2T\delta_0 D + 2T\sum_{i=0}^{\lceil k/J \rceil-1}\gamma^i \left(\sum_{j=1}^{J} \epsilon_{iJ+j} + \sum_{j=1}^{J} \delta_{iJ+j} D\right) 
    \\
    &+ 2T\gamma^{\lfloor k/J \rfloor} \left(\sum_{j=1}^{k\% J} \epsilon_{\lfloor k/J \rfloor \cdot J + j} + \sum_{j=1}^{k\% J} \delta_{\lfloor k/J \rfloor \cdot J + j} D\right).
\end{align}
\guannan{Explain more} In \eqref{eqn:41_expand} and \eqref{eqn:41rearrange}, we expand out the value functions and rearrange the terms. Applying~\Cref{coro:Q_gap} leads to the resulting bound. 
\end{proof}

\section{Lower bound of regret for non-stationary MDP not satisfying \Cref{assumption:strong_connect}}
\label{appendix:lower_bound}
In order to show the necessity of \Cref{assumption:strong_connect}, we provide the following counter-example that generates a linear regret for any $k$.

\begin{cexmp}
    \label{cexmp:lower_bdd}
    For any fixed $k,T$ such that $k \ll T$, we can generate the following non-stationary MDP with three states $\{s^i\}_{i=1,2,3}$. The learner starts at $s^1$ and must make the decision of going to $s^2$ or $s^3$ at time step $0$. Each of these actions has a reward of $0$. $s^2,s^3$ are sinks, and the learner can not move out after time step 1. Clearly, the above MDP does not satisfy \Cref{assumption:strong_connect}, since for any $J>0$, $P_t^{\pi_t} \circ \cdots P_{t+J}^{\pi_{t+J}}(s^1|s^2) = 0$ for all policies $\boldsymbol{\pi}$. 
   At time step $0$, a state $s^* \in \{s_2,s_3\}$ is chosen at random, such that the learner gets a reward of $1$ if at $s^*$ for all time steps after $k+1$, and a reward of $0$ if otherwise. We claim that any algorithm with prediction horizon $k$ would generate a linear regret for the above MDP, as there is a non-zero constant probability for any policy to not choose $s^*$ at time step $0$. 
\end{cexmp}

\end{document}